\theoremstyle{plain}
\newtheorem{theorem}{Theorem}
\newtheorem{lemma}[theorem]{Lemma}
\newtheorem{proposition}[theorem]{Proposition}
\newtheorem{corollary}[theorem]{Corollary}
\theoremstyle{definition}
\newtheorem{definition}[theorem]{Definition}
\newtheorem{example}[theorem]{Example}
\newcommand{\nb}[1]{}
\newcommand{\cn}{}
\newcommand{\nb}[1]{\textcolor{red}{$\ddagger$}\marginpar{\scriptsize\raggedright\textcolor{red}{#1}}}
\newcommand{\cn}{\textcolor{blue}{\scriptsize$|$}\marginpar{\scriptsize\raggedright\textcolor{blue}{rpn agree}}}
\newcommand{\Imc}{\ensuremath{\mathcal{I}}\xspace}
\newcommand{\Jmc}{\ensuremath{\mathcal{J}}\xspace}
\newcommand{\Tmc}{\ensuremath{\mathcal{T}}\xspace}
\newcommand{\sub}{\ensuremath{\mathsf{sub}}\xspace}
\newcommand{\ALC}{\ensuremath{\mathcal{ALC}}\xspace}
\newcommand{\EL}{\ensuremath{\mathcal{E\!L}}\xspace}
\newcommand{\DL}{{\ensuremath{\mathcal{DL}}}\xspace}
\newcommand{\Inf}{{\ensuremath{\mathsf{Inf}}}\xspace}
\newcommand{\qual}{{\ensuremath{\mathsf{IIC}}}\xspace}
\newcommand{\refine}{\ensuremath{\mathop{\uparrow}}\xspace}
\newcommand{\corefine}{\ensuremath{\mathop{\downarrow}}\xspace}
\DeclareMathOperator*{\argmax}{\mathsf{argmax}}
\begin{document}
%
\title{Repairing Ontologies via Axiom Weakening}
\author{
Nicolas  Troquard, 
Roberto  Confalonieri, 
Pietro  Galliani,
Rafael  Penaloza,  
Daniele  Porello, 
Oliver  Kutz\\
KRDB, Unibz, Bozen-Bolzano, Italy
}

\maketitle
\begin{abstract}
Ontology engineering is a hard and error-prone task, in which small changes may lead to errors,
or even produce an inconsistent ontology. As ontologies grow in size, the need for automated methods
for repairing inconsistencies while preserving as much of the original knowledge as possible increases.
Most previous approaches to this task are based on removing a few axioms from the ontology to regain 
consistency. We propose a new method based on \emph{weakening} these axioms to make them
less restrictive, employing the use of refinement operators. We introduce the theoretical framework for weakening DL ontologies, propose algorithms to repair ontologies based on the framework, and provide an analysis of the computational complexity. Through an empirical analysis made over real-life ontologies, we show that our approach
preserves significantly more of the original knowledge of the ontology than removing axioms.
\end{abstract}

	


\section{Introduction}

Ontology engineering is a hard and error-prone task, where even small changes may lead to unforeseen 
errors, in particular to inconsistency. Ontologies are not only growing in size, they are also increasingly being used in a variety of AI and NLP applications, e.g.,~\cite{Bateman-etal10,PRESTES20131193}.  At the same time, methods to generate ontologies through automated methods gain popularity: e.g., ontology
learning~\cite{LeHi10,SaSB15}, extraction from web resources such as DBpedia~\cite{Auer2007}, or the combination of knowledge from different sources \cite{modularontologies09}.  

Such ontology generation methods are all likely to require ontology repair and refinement steps, and trying to repair an ontology containing hundreds, or even thousands of axioms by hand is infeasible.
For these reasons, it has become fundamental to develop automated methods for repairing ontologies while preserving as much of the original knowledge as possible. 

Most existing ontology repair approaches are based on removing a few axioms to expel
the errors~\cite{ScCo03,kalyanpur2005debugging,kalyanpur2006repairing,BaPS07}.
While these methods are effective, and have been used in practice, they have the
side effect of removing also many potentially wanted implicit consequences. In this paper, we propose
a more fine-grained method for ontology repair based on \emph{weakening} axioms, thus making them
more general. The idea is that, through this weakening, more of the original knowledge is preserved; that
is, our method is less destructive.

We show, both theoretically and empirically, that axiom weakening is a powerful approach for repairing
ontologies. On the theoretical side, we prove that the computational complexity of this task is not 
greater than that of the standard reasoning tasks in description logics. Empirically, we compare the results of
weakening axioms against deleting them, over existing ontologies developed in the life sciences. This comparison shows that our approach
preserves significantly more of the original ontological knowledge than removing axioms, based on an evaluation measure inspecting the preservation of taxonomic structure (see e.g.,\ \cite{Alani:2006:ROA:2127045.2127047,resnik1999a} for related measures).

The main result of this paper is to present a new ontology repair methodology capable of preserving
most of the original knowledge, without incurring any additional costs in terms of computational 
complexity. By thereby preserving more implicit consequences of the ontology, our methodology also provides a 
contribution to the ontology development cycle~\cite{neuhaus2013towards}. Indeed, it can be a useful tool for test-driven
ontology development,  where the preservation of the entailment of competency questions from the weakened ontology can be seen as a measure for the quality of the repair \cite{gruninger1995role,ren2014towards}.




We begin by outlining formal preliminaries, including the introduction of refinement operators and a basic analysis of properties of both, specialisation and generalisation operators. This is followed by a complexity analysis of the problem of computing weakened axioms in our approach. We then present several variations of repair algorithms, a detailed empirical evaluation of their performance, and a quality analysis of the returned ontologies. We close with a discussion of related work and an outlook to future extensions and refinements of the presented ideas. 

\section{Preliminaries}

From a formal point of view, an ontology is a set of formulas in an appropriate logical language
with the purpose of describing a particular domain of interest. 
The precise logic used is in fact not crucial for our approach as most techniques introduced apply to a variety of logics; however, for the sake of clarity we use description
logics (DLs) as well-known examples of ontology languages. We briefly introduce the basic DL
\ALC; for full details see~\cite{BaaderDLH03}. 
The syntax of \ALC is based on two disjoint sets $N_C$ and $N_R$ of \emph{concept names}
and \emph{role names}, respectively.
The set of \emph{\ALC concepts} is generated by the grammar 
\begin{eqnarray*}
C & ::= & A\mid \neg C\mid C\sqcap C\mid C\sqcup C\mid \forall R.C\mid \exists R.C \enspace,
\end{eqnarray*}
where $A\in N_C$ and $R\in N_R$. 
A \emph{TBox} is a finite set of concept inclusions (GCIs) of the form $C\sqsubseteq D$ where $C$ and 
$D$ are concepts. It is used to store terminological knowledge regarding the relationships between 
concepts. 
An \emph{ABox} is a finite set of formulas of the form $C(a)$ and $R(a,b)$, which express 
knowledge about objects in the knowledge domain.

The semantics of \ALC is defined through \emph{interpretations} $I = (\Delta^I, \cdot^I)$, 
where $\Delta^I$ is a non-empty \emph{domain}, and $\cdot^I$ is a function mapping every
individual name to an element of $\Delta^I$, each concept name to a subset of the domain, and each role 
name to a binary relation on the domain.
The interpretation $\mathcal{I}$ is a \emph{model} of the TBox \Tmc if it satisfies all the GCIs in \Tmc. 
Given two concepts $C$ and $D$, we say that $C$ is \emph{subsumed} by $D$ w.r.t.\ the TBox 
$\mathcal{T}$ ($C \sqsubseteq_{\Tmc} D$) if $C^I \subseteq D^I$ for every model 
$I$ of \Tmc. We write $C \equiv_{\Tmc} D$ when $C \sqsubseteq_{\Tmc} D$ and 
$D \sqsubseteq_{\Tmc} C$.
$C$ is \emph{strictly subsumed by} $D$ w.r.t.\ \Tmc ($C \sqsubset_{\Tmc} D$) if 
$C \sqsubseteq_{\Tmc} D$ and $C \not\equiv_{\Tmc} D$.

\EL is the restriction of \ALC allowing only conjunctions and existential 
restrictions~\cite{DBLP:conf/ijcai/BaaderBL05}. It is widely used in biomedical ontologies for describing large terminologies since classification can be computed in polynomial time.\footnote{The OWL 2 EL profile significantly extends the basic \EL logic whilst maintaining its desirable polynomial time complexity, see  \url{https://www.w3.org/TR/owl2-profiles/}.}
%
In the following, \DL denotes either \ALC or \EL, and $\mathcal{L}(\DL, N_C, N_R)$ denotes the set of (complex) concepts that can be built over $N_C$ and $N_R$ in \DL.
\begin{definition}\label{def:sub}
Let \Tmc be a \DL TBox with concept names from $N_{C}$. The set of \emph{subconcepts} of $\Tmc$ is given by 
\begin{equation*}
\sub(\mathcal{T}) = \{\top,\bot\} \cup \bigcup_{C \sqsubseteq D \in \mathcal{T}} \sub(C) \cup \sub(D) \enspace ,
\end{equation*}
where for $C\in N_C\cup\{\top,\bot\}$, $\sub(C)=\{C\}$, and
\begin{eqnarray*}
\sub(\neg C) & = & \{\neg C\} \cup \sub(C) \\
\sub(C \sqcap D) & = & \{C \sqcap D\} \cup \sub(C) \cup \sub(D) \\
\sub(C \sqcup D) & = & \{C \sqcup D\} \cup \sub(C) \cup \sub(D) \\
\sub(\forall R . C) & = & \{\forall R . C\} \cup \sub(C) \\
\sub(\exists R . C) & = & \{\exists R . C\} \cup \sub(C) \enspace.
\end{eqnarray*}
\end{definition}
The size $|C|$ of a concept $C$ is the size of its syntactic tree where for every role $R$, 
$\exists R.$ and $\forall R.$ are individual nodes.
\begin{definition}
The \emph{size} $|C|$ of a concept $C$ is inductively defined as follows. For $C \in N_C \cup \{\top, \bot\}$, $|C| = 1$. Then, $|\lnot C| = 1 + |C|$; $|C \sqcap D| = |C \sqcup D| = 1 + |C| + |D|$; and $|\exists R. C| = |\forall R. C| = 1 + |C|$.
\end{definition}
The \emph{size} $|\mathcal{T}|$ of the TBox $\mathcal{T}$ is 
$\sum_{C \sqsubseteq D \in \mathcal{T}} (|C| + |D|)$.
Clearly, for every $C$ we have $\mathbf{card}(\sub(C)) \leq |C|$ and for every TBox $\mathcal{T}$ we have $\mathbf{card}(\sub(\mathcal{T})) \leq |\mathcal{T}| + 2$.

We now define the upward and downward cover sets of concept names. %
Intuitively,  the upward set of the concept $C$ collects the  most specific subconcepts of the TBox 
$\mathcal{T}$ that subsume $C$; conversely, the  downward set of $C$ collects the most  general 
subconcepts from  $\mathcal{T}$ subsumed by $C$.
The concepts in $\sub(\Tmc)$ are \emph{some} concepts that are relevant in the context of 
$\Tmc$\negmedspace, and that are used as building blocks for generalisations and specialisations. 
%
The properties of $\sub(\mathcal{T})$  
guarantee that the upward and downward cover sets are finite.

\begin{definition}
\label{def:cover}
Let \Tmc be a \DL TBox and $C$ a concept.
The {\em upward cover} and {\em downward cover} of $C$ w.r.t.\ $\Tmc$ are:
\begin{align}
\mathsf{UpCov}_{\Tmc}(C) := {} &\{ D \in \sub(\Tmc)  \mid 
	C \sqsubseteq_{\Tmc} D \text{ and} \nonumber \\ & 
	\nexists. D' \in \sub(\Tmc) \text{ with } C \sqsubset_{\Tmc} D' \sqsubset_{\Tmc} D\} ,\nonumber \\
\mathsf{DownCov}_{\Tmc}(C) := {} &\{ D \in \sub(\Tmc) \mid 
	D \sqsubseteq_{\Tmc} C  \text{ and}  \nonumber\\ &
	\nexists. D' \in \sub(\Tmc) \text{ with } D \sqsubset_{\Tmc} D' \sqsubset_{\Tmc} C \}. \nonumber
\end{align}
\end{definition}
%

Observe that $\mathsf{UpCov}_{\Tmc}$ and $\mathsf{DownCov}_{\Tmc}$ miss interesting refinements. 
Note also that this definition only returns meaningful results when used with a consistent ontology; otherwise it returns the whole set $\sub(\Tmc)$. \nb{Yes i'M right. please check. ... Doesn't it return the all sub(T)? OK: I think you are right because we can never proof that something is a strict inclusion because we can always proof both implications.}\cn 
Hence, when dealing with the repair problem of an inconsistent ontology $O$, we need a derived, consistent `reference ontology' $O^\text{ref}$ to steer the repair process; this is outlined in greater detail in the section on repairing ontologies.
\begin{example}
Let $A,B,C \in N_C$ and
$\Tmc = \{A \sqsubseteq B\}$. We have $\mathsf{UpCov}_{\Tmc}(A \sqcap C) = \{A\}$. Iterating, we get $\mathsf{UpCov}_{\Tmc}(A) = \{A, B\}$ and $\mathsf{UpCov}_{\Tmc}(B) = \{B, \top\}$. 
We could reasonably expect $B \sqcap C$ to be also a generalisation of $A \sqcap C$ w.r.t.\ \Tmc but it will be missed by the iterated application of $\mathsf{UpCov}_{\Tmc}$. Similarly, $\mathsf{UpCov}_{\Tmc}(\exists R. A) = \{\top\}$, while we can expect $\exists R. B$ to be a generalisation of $\exists R. A$. 
\end{example}
%
To take care of these omissions, we introduce a generalisation and specialisation operator.
We denote as $\mathsf{nnf}(C)$ the negation normal form of the concept $C$.
Let $\refine$ and $\corefine$ be two functions from $\mathcal{L}(\DL, N_C, N_R)$ to the powerset of $\mathcal{L}(\DL, N_C, N_R)$.
We define $\zeta_{\refine,\corefine}$, the \emph{abstract refinement operator}, by induction on the structure of concept descriptions as shown in Table~\ref{tab:abstract-refop}.
\begin{table}
\caption{Abstract refinement operator}
\label{tab:abstract-refop}
\resizebox{\columnwidth}{!}{
\parbox{\columnwidth}
{\footnotesize 
\begin{align*}
	\zeta_{\refine,\corefine}(A) = {} & \refine(A) \\
    \zeta_{\refine,\corefine}(\lnot A) = {} & 
			 \{ \mathsf{nnf}(\lnot C) \mid C \in \corefine(A) \} \cup \refine(\lnot A)
    \\
	\zeta_{\refine,\corefine}(\top) = {} &  \refine(\top)\\
	\zeta_{\refine,\corefine}(\bot) = {} & \refine(\bot) \\
  \zeta_{\refine,\corefine}(C \sqcap D) = {} & 
			 \{ C' \sqcap D \mid C' \in \zeta_{\refine,\corefine}(C) \} \cup \\ & \{ C \sqcap D' \mid D' \in \zeta_{\refine,\corefine}(D) \} \cup  \refine(C \sqcap D)
    \\
     \zeta_{\refine,\corefine}(C \sqcup D) = {} & 
			 \{ C' \sqcup D \mid C' \in \zeta_{\refine,\corefine}(C) \} \cup \\ & \{ C \sqcup D' \mid D' \in \zeta_{\refine,\corefine}(D) \} \cup \refine(C \sqcup D)	
    \\
	\zeta_{\refine,\corefine}(\forall R.C) = {} & 
			 \{ \forall R.C' \mid C' \in \zeta_{\refine,\corefine}(C) \} \cup  \refine(\forall R.C)
    \\
\zeta_{\refine,\corefine}(\exists R.C) = {} &
			 \{ \exists R.C' \mid C' \in \zeta_{\refine,\corefine}(C) \} \cup \refine(\exists R.C)
\end{align*}
}}
\end{table}
Complying with 
the previous observation, we define two concrete refinement operators from the abstract operator $\zeta_{\refine,\corefine}$.
\begin{definition}\label{def:refinementoperators}
The \emph{generalisation operator} and \emph{specialisation operator} are defined, respectively, as 
\begin{align*}
\gamma_\Tmc = {} & \zeta_{\mathsf{UpCov}_{\Tmc}, \mathsf{DownCov}_{\Tmc}} \enspace ,\text{and} \\ 
\rho_\Tmc = {} & \zeta_{\mathsf{DowCov}_{\Tmc}, \mathsf{UpCov}_{\Tmc}}\enspace .
\end{align*}
\end{definition}
\noindent Returning to our example, notice that for 
$\Tmc = \{A \sqsubseteq B\}$, we now have $\gamma_\Tmc(A \sqcap C) = \{B \sqcap C, A \sqcap \top, A\}$.


%
%
\begin{definition}
  Given a \DL concept $C$, its \emph{$i$-th refinement iteration} by means of $\zeta_{\refine,\corefine}$ (viz., $\zeta_{\refine,\corefine}^i(C)$) is inductively defined 
  as follows:
  \begin{itemize}
  \item $\zeta_{\refine,\corefine}^0(C) = \{C\}$;
  \item $\zeta_{\refine,\corefine}^{j+1}(C) = \zeta_{\refine,\corefine}^j(C) \cup \bigcup_{C' \in \zeta_{\refine,\corefine}^j(C)} \zeta_{\refine,\corefine}(C')$, \quad $j \geq 0$.
  \end{itemize}
The set of all concepts reachable from $C$ by means of $\zeta_{\refine,\corefine}$ in a finite number of steps is
$\zeta_{\refine,\corefine}^*(C) = \bigcup_{i \geq 0} \zeta_{\refine,\corefine}^i(C).
$
\end{definition}

Some basic properties about $\gamma_\Tmc$ and $\rho_\Tmc$  follow.
\begin{lemma}\label{lem:gen}
For every TBox $\Tmc$:
\newcommand\litem[1]{\item{\bfseries #1:\enspace }}
\begin{enumerate}
\litem{generalisation}\label{item:generalisation} if $X \in \gamma_{\Tmc}(C)$ then $C \sqsubseteq_{\Tmc} X$\\
\textbf{specialisation:\enspace} if $X \in \rho_{\Tmc}(C)$ then $X \sqsubseteq_{\Tmc} C$
\litem{reflexivity}\label{item:reflexivity} if $C \in \sub(\Tmc)$ then $C \in \mathsf{UpCov}_{\Tmc}(C)$ and\linebreak $C \in \mathsf{DownCov}_{\Tmc}(C)$ 
\litem{semantic stability of cover}\label{item:lem-sem-stable-upcover} if $C_1 \equiv_{\Tmc} C_2$ then\linebreak $C_1 \in \mathsf{UpCov}_{\Tmc}(C)$ iff $C_2 \in \mathsf{UpCov}_{\Tmc}(C)$ and\linebreak $C_1 \in \mathsf{DownCov}_{\Tmc}(C)$ iff $C_2 \in \mathsf{DownCov}_{\Tmc}(C)$
\litem{relevant completeness} \label{item:relevant-completeness}$\mathsf{UpCov}_{\Tmc}(C) \subseteq \gamma_{\Tmc}(C)$ and $\mathsf{DownCov}_{\Tmc}(C) \subseteq \rho_{\Tmc}(C)$
\litem{generalisability}\label{item:generalisability} if $C, D \in \sub(\Tmc)$ and $C \sqsubseteq_{\Tmc} D$ then $D \in \gamma_{\Tmc}^*(C)$\\
\textbf{specialisability:} if $C, D \in \sub(\Tmc)$ and $D \sqsubseteq_{\Tmc} C$ then $D \in \rho_{\Tmc}^*(C)$
\litem{trivial generalisability}\label{item:trivial-generalisability} $\top \in \gamma_{\Tmc}^*(C)$\\ 
\textbf{falsehood specialisability:\enspace} $\bot \in \rho_{\Tmc}^*(C)$
\litem{generalisation finiteness}\label{item:generalisation-finiteness} $\gamma_{\Tmc}(C)$ is finite\\
\textbf{specialisation finiteness:\enspace} $\rho_{\Tmc}(C)$ is finite
\end{enumerate}
\end{lemma}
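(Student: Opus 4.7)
The overall plan is to treat each clause in turn, handling only the generalisation half of every dual pair since the specialisation halves are entirely symmetric. The easy items are direct unfoldings of the relevant definitions: \textbf{reflexivity} holds because if $C \in \sub(\Tmc)$ then $C \sqsubseteq_\Tmc C$ and no $D' \in \sub(\Tmc)$ can sit strictly between $C$ and $C$; \textbf{semantic stability} holds because both $\sqsubseteq_\Tmc$ and $\sqsubset_\Tmc$ are invariant under $\equiv_\Tmc$, so replacing $C_1$ by an equivalent $C_2$ preserves both conditions in the definition of the covers; \textbf{relevant completeness} holds because in every case of Table~\ref{tab:abstract-refop} the set $\refine(C)$ appears as a direct summand of $\zeta_{\refine,\corefine}(C)$, so instantiating $\refine := \mathsf{UpCov}_\Tmc$ gives the inclusion; and \textbf{generalisation finiteness} follows by structural induction on $C$, using that $\mathsf{UpCov}_\Tmc(C) \subseteq \sub(\Tmc)$ is finite and each inductive clause is a finite union of finite sets by the induction hypothesis.

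For the \textbf{generalisation} property I would proceed by structural induction on $C$. The base cases ($C$ a concept name, $\top$, $\bot$, or a negated atom) follow from the defining property of $\mathsf{UpCov}_\Tmc$ together with the observation that if $D \in \mathsf{DownCov}_\Tmc(A)$ then $D \sqsubseteq_\Tmc A$, hence $\lnot A \sqsubseteq_\Tmc \lnot D \equiv_\Tmc \mathsf{nnf}(\lnot D)$. The inductive step uses monotonicity of each concept constructor in each argument: if $C' \in \gamma_\Tmc(C)$ with $C \sqsubseteq_\Tmc C'$ by induction hypothesis, then $C \sqcap D \sqsubseteq_\Tmc C' \sqcap D$, and similarly for $\sqcup$, $\forall R.{-}$, and $\exists R.{-}$. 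The extra $\refine(\cdot) = \mathsf{UpCov}_\Tmc(\cdot)$ summand is handled exactly as in the atomic case.

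The remaining items combine finiteness of $\sub(\Tmc)$ with the preceding parts. For \textbf{generalisability}, given $C, D \in \sub(\Tmc)$ with $C \sqsubseteq_\Tmc D$, I would build a finite $\sqsubseteq_\Tmc$-ascending chain $C = E_0, E_1, \dots, E_n = D$ inside $\sub(\Tmc)$ with $E_{i+1}$ chosen to be $\sqsubseteq_\Tmc$-minimal in the set $\{E \in \sub(\Tmc) \mid E_i \sqsubset_\Tmc E \sqsubseteq_\Tmc D\}$; by construction each $E_{i+1} \in \mathsf{UpCov}_\Tmc(E_i) \subseteq \gamma_\Tmc(E_i)$ using part 4, and the process terminates because $\sub(\Tmc)$ is finite. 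For \textbf{trivial generalisability}, the starting $C$ need not be in $\sub(\Tmc)$, but $\top \in \sub(\Tmc)$ always satisfies $C \sqsubseteq_\Tmc \top$, so $\mathsf{UpCov}_\Tmc(C)$ is non-empty; one application of $\gamma_\Tmc$ moves $C$ into $\sub(\Tmc)$, after which generalisability reaches $\top$.

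The main obstacle is making the chain argument for generalisability precise in the presence of semantically equivalent but syntactically distinct members of $\sub(\Tmc)$: the chain must be measured in the quotient $\sub(\Tmc)/{\equiv_\Tmc}$, so that strict descents $E_i \sqsubset_\Tmc E_{i+1}$ really do shrink the remaining distance to $D$ and the process terminates after at most $\mathbf{card}(\sub(\Tmc)/{\equiv_\Tmc})$ steps. Part 3 (semantic stability of cover) is exactly what is needed to close this gap, since it guarantees that the choice of representative inside an equivalence class does not affect membership in $\mathsf{UpCov}_\Tmc(E_i)$, and so the construction of the chain is well-defined at each stage.
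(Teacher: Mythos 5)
Your proposal is correct and follows the paper's decomposition almost item for item: reflexivity, semantic stability and relevant completeness as direct unfoldings of Definitions~\ref{def:cover} and~\ref{def:refinementoperators}, generalisation by structural induction using monotonicity of the constructors (including the $\mathsf{nnf}(\lnot C)$ case via $\mathsf{DownCov}_\Tmc$), finiteness by structural induction from $\mathsf{UpCov}_\Tmc(C)\subseteq\sub(\Tmc)$, and trivial generalisability obtained from generalisability after one application of $\gamma_\Tmc$ lands in the non-empty set $\mathsf{UpCov}_\Tmc(C)\subseteq\sub(\Tmc)$. The one genuine difference is the generalisability item: the paper argues by contradiction in two stages (Lemma~\ref{lemma:upcov_from_below} shows that if $D\notin\mathsf{UpCov}_\Tmc(C)$ then some member of $\mathsf{UpCov}_\Tmc(C)$ lies strictly between $C$ and $D$, and Theorem~\ref{theo:gene_upcov} then derives an unboundedly long chain of pairwise distinct elements of the finite set $\sub(\Tmc)$), whereas you give a direct greedy construction, picking $E_{i+1}$ as a $\sqsubseteq_\Tmc$-minimal element of $\{E\in\sub(\Tmc)\mid E_i\sqsubset_\Tmc E\sqsubseteq_\Tmc D\}$; minimality indeed forces $E_{i+1}\in\mathsf{UpCov}_\Tmc(E_i)$, so your version is constructive and arguably cleaner, at the price of two small details you should make explicit. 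First, the chain can stall one step short of $D$: if some $E_i$ is equivalent to $D$ but syntactically distinct, your candidate set is empty; the fix is immediate, since then nothing lies strictly between $E_i$ and $D$, so $D\in\mathsf{UpCov}_\Tmc(E_i)$ and you close the chain with $D$ itself. Second, the appeal to item~3 (semantic stability) for termination is not actually needed: each step is a strict subsumption $E_i\sqsubset_\Tmc E_{i+1}$, so the $E_i$ are pairwise non-equivalent, hence pairwise distinct elements of the finite set $\sub(\Tmc)$, which already bounds the length of the chain --- this is exactly the finiteness argument the paper uses, just phrased without the quotient.
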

\begin{proof}
See the appendix.
Item~\ref{item:generalisation} is Lemma~\ref{lem:gamma-is-generalisation}.
Item~\ref{item:reflexivity}, item~\ref{item:lem-sem-stable-upcover}, and item~\ref{item:relevant-completeness} are simple consequences of Definition~\ref{def:cover} and Definition~\ref{def:refinementoperators}.
Item~\ref{item:generalisability} is Corollary~\ref{cor:generalisability}.
Item~\ref{item:trivial-generalisability} is in turn a corollary of item~\ref{item:generalisability}.
Item~\ref{item:generalisation-finiteness} is Lemma~\ref{lemma:gamma-finite}.
\end{proof}
%
Although $\gamma_\Tmc(C)$ and $\rho_\Tmc(C)$ are always finite (see Lemma~\ref{lem:gen}.\ref{item:generalisation-finiteness}), this is not the case for $\gamma_\Tmc^*(C)$ and $\rho_\Tmc^*(C)$.
Indeed, their iterated application can produce an infinite chain of refinements. 
\begin{example}
If $\Tmc = \{A {\sqsubseteq} \exists r.A\}$, then $\gamma_\Tmc(A) = \{A, \exists r.A\}$. 
Thus $\gamma_\Tmc(\exists r.A) = \{\exists r.A, \exists r.\exists r.A\} \cup \{\top\}$
(notice that $\top \in \gamma_\Tmc^2(A)$). Continuing the iteration of $\gamma_\Tmc$ on $A$, 
we get $(\exists r.)^k A \in \gamma_\Tmc^k(A)$ for every $k \geq 0$.
\end{example}
This is not a feature caused by the existential quantification alone. Similar examples exist that involve universal quantification, disjunction, and conjunction.\footnote{From the perspective of ontology repair, infinite refinement chains are not an issue since there are always finite chains (Lemma~\ref{lem:gen}.\ref{item:trivial-generalisability}). If needed, it can be simply circumvented by imposing a bound on the size of the considered refinements.} 
Notice that although the covers of two provably equivalent concepts are the same
(Lemma~\ref{lem:gen}.\ref{item:lem-sem-stable-upcover}), it is not the case that $\gamma_\Tmc(C_1) = \gamma_\Tmc(C_2)$ whenever $C_1 \equiv_\Tmc C_2$. For example, with the TBox $\Tmc = \{A \sqsubseteq B\}$, we have $\gamma_\Tmc(A) = \{A, B\}$ and $\gamma_\Tmc(\top \sqcap A) = \{\top \sqcap A, \top \sqcap B, A, B\}$.\nb{possibly hide this}

\section{Complexity}

We now analyse the computational aspects of the refinement operators.
\begin{definition}
Given a TBox \Tmc and concepts $C,D$,
the problems {\sc $\gamma_\Tmc$-membership} and {\sc $\rho_\Tmc$-membership} ask whether $D \in \gamma_\Tmc(C)$ and $D \in \rho_\Tmc(C)$, respectively.
\end{definition}
We show that $\gamma_\Tmc$ and $\rho_\Tmc$ are efficient refinement operators, in the sense that 
deciding {\sc $\gamma_\Tmc$-membership} and {\sc $\rho_\Tmc$\mbox{-}membership} is not harder than 
deciding (atomic) concept subsumption in the underlying logic. Recall that
subsumption  is ExpTime-complete in \ALC and PTime-complete in \EL.
We show that the same complexity bounds hold for
{\sc $\gamma_\Tmc$-membership}.

For proving hardness, we first show that deciding whether $C' \in \mathsf{UpCover}_\Tmc(C)$ is as hard 
as atomic concept subsumption (Theorem~\ref{th:hardness-cover}). Then we show that 
{\sc $\gamma_\Tmc$\mbox{-}membership} is just as hard (Theorem~\ref{th:compl-gamma-lowerbound}).
For the upper bounds, we first establish the complexity of computing the set $\mathsf{UpCover}_\Tmc(C)$ (Theorem~\ref{th:upcov-complexity-membership}). We then show that we can decide 
{\sc $\gamma_\Tmc$\mbox{-}membership} resorting to at most a linear number of computations 
$\mathsf{UpCover}_\Tmc(C')$ (Theorem~\ref{th:compl-gamma-upperbound}).
Combining Theorem~\ref{th:compl-gamma-lowerbound} and Theorem~\ref{th:compl-gamma-upperbound}, we obtain the result.
\begin{theorem}
{\sc $\gamma_\Tmc$-membership} is ExpTime-complete for \ALC and PTime-complete for \EL.
\end{theorem}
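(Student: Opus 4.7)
The plan is to follow the roadmap flagged in the paragraphs preceding the statement: sandwich the complexity of $\gamma_\Tmc$-membership between two copies of atomic concept subsumption in the underlying DL, routing through $\mathsf{UpCov}_\Tmc$-membership. Since subsumption is ExpTime-complete for \ALC and PTime-complete for \EL, matching bounds on $\gamma_\Tmc$-membership yield the theorem.

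For the lower bound I would chain two reductions. First, concept subsumption reduces to $\mathsf{UpCov}_\Tmc$-membership: given an atomic query ``$A \sqsubseteq_\Tmc B$?'', construct a polynomial extension $\Tmc'$ of $\Tmc$ by adding fresh concept names and axioms that make $B$ a minimal superconcept of $A$ among $\sub(\Tmc')$, so that $A \sqsubseteq_\Tmc B$ iff $B \in \mathsf{UpCov}_{\Tmc'}(A)$. Second, $\mathsf{UpCov}_\Tmc$-membership on atomic inputs reduces immediately to $\gamma_\Tmc$-membership because the first clause of Table~\ref{tab:abstract-refop} gives $\gamma_\Tmc(A) = \mathsf{UpCov}_\Tmc(A)$ for every concept name $A$. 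Both steps preserve the source logic, giving ExpTime-hardness for \ALC and PTime-hardness for \EL.

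For the upper bound I would first show that $\mathsf{UpCov}_\Tmc(C)$ is computable with at most $O(|\Tmc|^2)$ subsumption tests: $O(|\Tmc|)$ tests identify the elements of $\sub(\Tmc)$ above $C$, and $O(|\Tmc|^2)$ further pairwise tests strip out the non-minimal ones, so $\mathsf{UpCov}_\Tmc$ inherits the ExpTime/PTime bounds of the base subsumption problem. To decide $D \in \gamma_\Tmc(C)$, I would read off from Table~\ref{tab:abstract-refop} that either $D \in \mathsf{UpCov}_\Tmc(C)$ directly (one cover call), or $C$ and $D$ share the same outermost constructor and $D$ differs from $C$ in exactly one argument position, in which case recurse on that argument; negated concept names are handled symmetrically by a $\mathsf{DownCov}_\Tmc$ call. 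Each recursive step consumes at least one symbol of $C$, so $O(|C|)$ cover computations suffice, keeping $\gamma_\Tmc$-membership within ExpTime for \ALC and PTime for \EL.

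The main obstacle is the lower-bound construction, because the ``no strict intermediate'' clause in Definition~\ref{def:cover} is delicate: any unrelated concept of $\sub(\Tmc')$ that happens to sit strictly between $A$ and $B$ in the induced subsumption order would invalidate the equivalence $A \sqsubseteq_\Tmc B \iff B \in \mathsf{UpCov}_{\Tmc'}(A)$. I would guard against this by a careful use of fresh concept names, ensuring that every candidate intermediate is $\equiv_{\Tmc'}$-equivalent to $A$ or to $B$, whereupon Lemma~\ref{lem:gen}.\ref{item:lem-sem-stable-upcover} closes the argument. A secondary subtlety is that the reduction must remain inside \EL when the input is in \EL; this is feasible because the construction only introduces GCIs between fresh and existing concept names, avoiding any appeal to negation or disjunction.
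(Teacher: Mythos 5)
Your upper bound is essentially the argument the paper itself uses (Theorems~\ref{th:upcov-complexity-membership} and~\ref{th:compl-gamma-upperbound}): quadratically many subsumption tests compute $\mathsf{UpCov}_\Tmc(C')$, and a structural descent through Table~\ref{tab:abstract-refop} needs only linearly many cover computations, so that part is fine. The genuine gap is in your lower bound, and it sits exactly at the obstacle you flag. You propose to extend $\Tmc$ with fresh names and axioms so that $B$ becomes a minimal superconcept of $A$ among $\sub(\Tmc')$, i.e.\ $A\sqsubseteq_\Tmc B$ iff $B\in\mathsf{UpCov}_{\Tmc'}(A)$. This cannot be realised as described: if some $E\in\sub(\Tmc)$ already satisfies $A\sqsubset_\Tmc E\sqsubset_\Tmc B$, then these strict inclusions survive in every extension of $\Tmc$ by axioms over fresh names (such axioms are conservative over the old signature), so $B\notin\mathsf{UpCov}_{\Tmc'}(A)$ although $A\sqsubseteq_\Tmc B$, and the forward direction of your equivalence fails. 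Your proposed remedy---forcing every candidate intermediate to be $\equiv_{\Tmc'}$-equivalent to $A$ or to $B$---is not available either: to add the collapsing axioms selectively you would first have to decide which elements of $\sub(\Tmc)$ are intermediates, i.e.\ solve the very subsumption problem you are reducing from (not affordable inside the reduction, especially for the ExpTime case), while adding such equivalences indiscriminately alters the subsumption relation between the original concepts and can make $A\sqsubseteq_{\Tmc'}B$ true when $A\not\sqsubseteq_\Tmc B$, breaking the converse direction.

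The paper circumvents the intermediates instead of collapsing them. In Theorem~\ref{th:hardness-cover} it sets $\Tmc':=\Tmc\cup\{X\sqcap B\sqsubseteq\top\}$ for a fresh name $X$ and asks whether $X\sqcap B\in\mathsf{UpCov}_{\Tmc'}(X\sqcap A)$: by Lemma~\ref{lem:bot} one can reinterpret $X$ as $\emptyset$ without changing any concept of $\sub(\Tmc)$, so any $E\in\sub(\Tmc)$ with $X\sqcap A\sqsubset_{\Tmc'}E\sqsubseteq_{\Tmc'}X\sqcap B$ would have to be unsatisfiable w.r.t.\ $\Tmc'$, contradicting $X\sqcap A\sqsubset_{\Tmc'}E$; hence no intermediate can exist when $A\sqsubseteq_\Tmc B$, and a model with $X^\Jmc=\Delta^\Jmc$ handles the case $A\not\sqsubseteq_\Tmc B$. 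Note the knock-on effect on your second step: this hardness instance has the \emph{complex} concept $X\sqcap A$ in the first argument, so the observation that $\gamma_\Tmc$ and $\mathsf{UpCov}_\Tmc$ coincide on concept names does not apply directly. The paper bridges this with a second fresh-name trick in Theorem~\ref{th:compl-gamma-lowerbound}: adding $X\equiv C$ and using Lemma~\ref{lem:technical2} together with Lemma~\ref{lem:gen}.\ref{item:lem-sem-stable-upcover} to show $D\in\mathsf{UpCov}_\Tmc(C)$ iff $D\in\gamma_{\Tmc'}(X)$. If you repair your step~1 along these lines (and check, as the paper implicitly does, that both gadgets stay inside \EL), the rest of your outline goes through.
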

Similar arguments can be used to establish the same complexities for {\sc $\rho_\Tmc$-membership}.
\begin{corollary}
{\sc $\rho_\Tmc$-membership} is ExpTime-complete for \ALC and PTime-complete for \EL.
\end{corollary}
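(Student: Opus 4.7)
The plan is to mirror the proof of the preceding theorem by exploiting the symmetry between the two refinement operators. Since $\rho_\Tmc = \zeta_{\mathsf{DownCov}_\Tmc, \mathsf{UpCov}_\Tmc}$ is obtained from $\gamma_\Tmc = \zeta_{\mathsf{UpCov}_\Tmc, \mathsf{DownCov}_\Tmc}$ by swapping the two cover operators, and the definition of $\zeta_{\refine,\corefine}$ in Table~\ref{tab:abstract-refop} treats its two arguments symmetrically (up to the way negated atoms are delegated to $\corefine$), every step of the proof for $\gamma_\Tmc$ should admit a dual counterpart for $\rho_\Tmc$. The overall structure will therefore follow the same four-step template: hardness of cover membership, hardness of operator membership, complexity of computing the cover, and reduction of operator membership to a linear number of cover queries.

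For the hardness direction, I would first establish a dual of Theorem~\ref{th:hardness-cover}, showing that deciding whether $C' \in \mathsf{DownCov}_\Tmc(C)$ is at least as hard as atomic concept subsumption. Where the original reduction encodes a subsumption instance $A \sqsubseteq_\Tmc B$ as a question about the most specific subsumers of some concept in $\sub(\Tmc)$, the dual reduction encodes it as a question about the most general subsumees, flipping the role of $A$ and $B$ and leaving the surrounding construction untouched. From this, the argument of Theorem~\ref{th:compl-gamma-lowerbound} transfers verbatim with $\gamma_\Tmc$ replaced by $\rho_\Tmc$ and Up/Down covers exchanged, because the encoding of a cover-membership instance as a $\zeta$-membership instance only depends on the symbolic position of $\refine$ in the abstract operator and not on which concrete cover plays that role.

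For the upper bound I would observe that Theorem~\ref{th:upcov-complexity-membership} goes through unchanged for $\mathsf{DownCov}_\Tmc(C)$: there are at most $|\Tmc|+2$ candidates in $\sub(\Tmc)$, each candidate requires one subsumption test plus comparisons against the already-known maximal candidates, and subsumption is ExpTime-complete in \ALC and PTime-complete in \EL. Then, following Theorem~\ref{th:compl-gamma-upperbound}, membership of $D$ in $\rho_\Tmc(C)$ is decided by a single recursive descent on the syntactic tree of $C$, which at each internal node inspects $\mathsf{DownCov}_\Tmc$, and at a negated atom inspects $\mathsf{UpCov}_\Tmc$ after restoring negation normal form. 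The total number of cover computations is linear in $|C|$, keeping the overall cost within ExpTime for \ALC and PTime for \EL. Combining the two directions closes the gap.

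The main obstacle, and the only place where bookkeeping can go wrong, is the negation case: in Table~\ref{tab:abstract-refop} the clause for $\lnot A$ calls $\corefine$ rather than $\refine$, so when $\gamma_\Tmc$ is replaced by $\rho_\Tmc$ the operator invoked on $\lnot A$ flips from $\mathsf{DownCov}_\Tmc$ to $\mathsf{UpCov}_\Tmc$. This must be tracked consistently both in the hardness reduction and in the recursive membership check, but it introduces no genuinely new argument. For \EL the issue does not arise at all since the syntax has no negation, which makes the dualisation entirely routine. This is precisely why the result is phrased as a corollary.
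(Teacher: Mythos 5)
Your proposal is correct and matches the paper's intent exactly: the paper proves the corollary by simply invoking ``similar arguments'' to the $\gamma_\Tmc$ case, and your dualisation (hardness of $\mathsf{DownCov}_\Tmc$-membership, the $X \equiv C$ reduction with $\rho_{\Tmc'}(X) = \mathsf{DownCov}_{\Tmc'}(X)$, brute-force cover computation, and a linear number of cover queries for the recursive membership check) is precisely that argument spelled out. Your observation that the $\lnot A$ clause flips to $\mathsf{UpCov}_\Tmc$ under $\rho_\Tmc$, and is vacuous for \EL, is the only subtlety and you handle it correctly.
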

The remainder of this section provides the details.
We first prove a technical lemma used in the 
reduction from concept subsumption to deciding whether $C' \in \mathsf{UpCover}_\Tmc(C)$.
\begin{lemma}
\label{lem:bot}
Let \Tmc be a \DL TBox and $X\notin\sub(\Tmc)$. 
Then, for every model $\Imc$ of 
$\Tmc':=\Tmc\cup\{X\sqcap B\sqsubseteq \top\}$
there is a model $\Jmc$ of $\Tmc'$ such that
\begin{enumerate}
\item $X^\Jmc=\emptyset$, and
\item for every $C\in\sub(\Tmc)$, $C^\Imc=C^\Jmc$.
\end{enumerate}
\end{lemma}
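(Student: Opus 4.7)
The plan is to obtain $\mathcal{J}$ from $\mathcal{I}$ by a minimal vocabulary surgery: keep everything identical except empty the extension of the fresh concept name $X$. Concretely, I would set $\Delta^\mathcal{J} := \Delta^\mathcal{I}$, $X^\mathcal{J} := \emptyset$, $Y^\mathcal{J} := Y^\mathcal{I}$ for every concept name $Y \neq X$, $R^\mathcal{J} := R^\mathcal{I}$ for every role name $R$, and $a^\mathcal{J} := a^\mathcal{I}$ for every individual name. Clause~1 is immediate by construction.

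For clause~2, I would argue by structural induction on $C \in \sub(\mathcal{T})$. The crucial observation is that the hypothesis $X \notin \sub(\mathcal{T})$ propagates: no subconcept appearing in the construction of any $C \in \sub(\mathcal{T})$ mentions $X$, because $\sub(\mathcal{T})$ is closed under taking subconcepts (by Definition~\ref{def:sub}). Hence every base case is either $\top$, $\bot$, or a concept name different from $X$, on all of which $\mathcal{I}$ and $\mathcal{J}$ coincide; and each inductive step (negation, conjunction, disjunction, $\forall R.\,$, $\exists R.\,$) depends only on the interpretation of the immediate subconcepts and of the role $R$, all of which are preserved by $\mathcal{J}$. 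So $C^\mathcal{J} = C^\mathcal{I}$ for every $C \in \sub(\mathcal{T})$.

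It then remains to verify that $\mathcal{J}$ is a model of $\mathcal{T}' = \mathcal{T} \cup \{X \sqcap B \sqsubseteq \top\}$. For any GCI $C \sqsubseteq D \in \mathcal{T}$, both $C$ and $D$ lie in $\sub(\mathcal{T})$, so clause~2 yields $C^\mathcal{J} = C^\mathcal{I} \subseteq D^\mathcal{I} = D^\mathcal{J}$, using that $\mathcal{I} \models \mathcal{T}$. The auxiliary inclusion $X \sqcap B \sqsubseteq \top$ is a tautology, satisfied by every interpretation since $\top^\mathcal{J} = \Delta^\mathcal{J}$.

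There is no genuine obstacle here; the lemma is essentially a vocabulary-locality statement, and the only point requiring care is to spell out that $X$ does not occur syntactically inside any $C \in \sub(\mathcal{T})$, which is guaranteed by the subconcept-closure of $\sub(\mathcal{T})$ together with the assumption $X \notin \sub(\mathcal{T})$. This fact is precisely what lets the inductive step for concept constructors go through without interaction between $X^\mathcal{J}$ and $C^\mathcal{J}$.
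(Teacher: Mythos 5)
Your proposal is correct and follows essentially the same route as the paper's proof: define $\Jmc$ to agree with $\Imc$ on everything except that $X^\Jmc=\emptyset$, observe that $X$ cannot occur in any concept of $\sub(\Tmc)$ (so a structural induction gives clause~2), and conclude that $\Jmc$ satisfies $\Tmc$ together with the tautological axiom $X\sqcap B\sqsubseteq\top$. Your write-up is merely a bit more explicit than the paper's about why the axioms of $\Tmc$ remain satisfied, but the construction and the induction are identical.
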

\begin{proof}
We define the interpretation \Jmc where all role
names are interpreted as in \Imc, and for every
concept name $A\in N_C$ 
\[
A^\Jmc := 
	\begin{cases}
	 \emptyset & \text{if $A=X$} \\
     A^\Imc & \text{otherwise}.
	\end{cases}
\]
Since $X$ only appears in a tautology, \Jmc is also
a model of $\Tmc'$. Using induction on the structure
of the concepts, it is easy to show that the second
condition of the lemma holds.
\end{proof}

The following theorem is instrumental in the proof of Theorem~\ref{th:compl-gamma-lowerbound}.
\begin{theorem}\label{th:hardness-cover}
Let \Tmc be a \DL TBox and let $C$ be an arbitrary \DL concept.
Deciding whether $D\in\mathsf{UpCov}_{\Tmc}(C)$
is as hard as deciding atomic subsumption w.r.t.\ a TBox over \DL.
\end{theorem}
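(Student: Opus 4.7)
The plan is a linear-time reduction from atomic subsumption to $\mathsf{UpCov}$-membership, driven by Lemma~\ref{lem:bot}. Given an instance $(\Tmc, A, B)$ with $A, B \in \sub(\Tmc)$ (w.l.o.g., add trivial axioms $A \sqsubseteq A$ and $B \sqsubseteq B$ otherwise), pick a fresh concept name $X \notin \sub(\Tmc)$ and set
\[
\Tmc' := \Tmc \cup \{X \sqcap B \sqsubseteq \top\}, \qquad C := A \sqcap X, \qquad D := X \sqcap B.
\]
The added axiom is a tautology, so $\sub(\Tmc') = \sub(\Tmc) \cup \{X, X \sqcap B\}$ and subsumption among $\sub(\Tmc)$-concepts is preserved. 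The claim to establish is that $A \sqsubseteq_{\Tmc} B$ iff $D \in \mathsf{UpCov}_{\Tmc'}(C)$.

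The easy direction is the contrapositive: if $A \not\sqsubseteq_{\Tmc} B$, pick a $\Tmc$-model \Imc with some $d \in A^{\Imc} \setminus B^{\Imc}$, extend it by $X^{\Imc} := \Delta^{\Imc}$ (the added axiom is then trivially satisfied), and observe $d \in C^{\Imc} \setminus D^{\Imc}$. Hence $C \not\sqsubseteq_{\Tmc'} D$ and thus $D \notin \mathsf{UpCov}_{\Tmc'}(C)$.

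For the forward direction, $A \sqsubseteq_{\Tmc} B$ immediately yields $C \sqsubseteq_{\Tmc'} D$. The substantive part --- and the main obstacle --- is ruling out any strict intermediate $D' \in \sub(\Tmc')$ with $C \sqsubset_{\Tmc'} D' \sqsubset_{\Tmc'} D$. The two newly added candidates are disposed of directly: $D' = X \sqcap B$ equals $D$ (not strict), and $D' = X$ cannot satisfy $X \sqsubset_{\Tmc'} X \sqcap B$ since $X \sqcap B \sqsubseteq X$ holds always. The interesting case is $D' \in \sub(\Tmc)$: from $D' \sqsubseteq_{\Tmc'} D$ we would in particular need $D' \sqsubseteq_{\Tmc'} X$, and by Lemma~\ref{lem:bot} every model \Imc of $\Tmc'$ admits a model \Jmc with $X^{\Jmc} = \emptyset$ and $D'^{\Jmc} = D'^{\Imc}$; this forces $D'^{\Imc} = \emptyset$ in every such model, i.e.\ $D' \equiv_{\Tmc} \bot$. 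But then $D' \sqsubseteq_{\Tmc'} C$, contradicting the strictness $C \sqsubset_{\Tmc'} D'$.

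Polynomiality of the reduction is immediate, and the construction applies uniformly to \ALC and \EL: in \EL, where concepts built from $\sqcap$ and $\exists$ are always satisfiable, the intermediate-exclusion case for $D' \in \sub(\Tmc)$ is vacuous and the argument only simplifies. The delicate point to get right in the full write-up is precisely the interaction between the tautological axiom $X \sqcap B \sqsubseteq \top$ and Lemma~\ref{lem:bot}: this is exactly the device that allows any $\sub(\Tmc)$-candidate lying below $D$ to be collapsed to $\bot$ and therefore disqualified as a strict intermediate, yielding the desired reduction.
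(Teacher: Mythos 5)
Your proposal is correct and follows essentially the same route as the paper's own proof: the identical reduction via a fresh name $X$ and the tautological axiom $X \sqcap B \sqsubseteq \top$, testing whether $X \sqcap B \in \mathsf{UpCov}_{\Tmc'}(X \sqcap A)$, with Lemma~\ref{lem:bot} used exactly as in the paper to collapse any would-be intermediate $E \in \sub(\Tmc)$ to $\bot$ and contradict strictness, and the same model extension $X^\Jmc = \Delta^\Imc$ for the converse direction. No substantive differences.
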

\begin{proof}
We propose a reduction from the problem of deciding atomic subsumption w.r.t.\ a TBox.
Let \Tmc be a \DL TBox, and $A,B$ be two concept
names. We assume w.l.o.g.\ that $\{A,B\}\subseteq\sub(\Tmc)$. Define the new TBox
\[
\Tmc' := \Tmc \cup \{X\sqcap B\sqsubseteq \top\} \enspace ,
\]
where $X$ is a new concept name (not appearing in
\Tmc).%
\footnote{We use this tautology only
to ensure that $X\sqcap B\in\sub(\Tmc')$, to satisfy
the restriction on the definition of the upward
cover. 
}
We show that $A\sqsubseteq_\Tmc B$ iff
$X\sqcap B\in\mathsf{UpCov}_{\Tmc'}(X\sqcap A)$.

\noindent[$\Rightarrow$]
If $A\sqsubseteq_\Tmc B$, then 
$X\sqcap A\sqsubseteq_\Tmc X\sqcap B$, and hence it
also holds $X\sqcap A\sqsubseteq_{\Tmc'} X\sqcap B$.
Assume that there is some $E\in\sub(\Tmc')$
with $X\sqcap A\sqsubset_{\Tmc'} E\sqsubset_{\Tmc'} X\sqcap B$.
Then $E$ cannot be $X\sqcap B$, nor $X$. Hence $E\in\sub(\Tmc)$. Let \Imc be an arbitrary model of $\Tmc'$. By Lemma~\ref{lem:bot}, there is a model $\Jmc$ with
$E^\Imc=E^\Jmc$ and $X^\Jmc = \emptyset$. But since by assumption $E \sqsubseteq_{\Tmc'} X \sqcap B$, it must be that $E^\Jmc = \emptyset$, and hence $E^\Imc = \emptyset$. It then follows
that for every model \Imc of $\Tmc'$, we have $E^\Imc=\emptyset$,
which is a contradiction with the assumption $X\sqcap A\sqsubset_{\Tmc'} E$. 
We conclude that $X\sqcap B\in\mathsf{UpCov}_{\Tmc'}(X\sqcap A)$.

\noindent[$\Leftarrow$]
If $A\not\sqsubseteq_\Tmc B$, there is a 
model \Imc of \Tmc with $A^\Imc\not\subseteq B^\Imc$.
We can extend this interpretation to a model $\Jmc$ 
of $\Tmc'$ by setting $X^\Jmc=\Delta^\Imc$, and
$A^\Jmc=A^\Imc$ for all other concept names; and
$r^\Jmc=r^\Imc$ for all role names.
Then $(X\sqcap A)^\Jmc\not\subseteq(X\sqcap B)^\Jmc$,
and hence $X\sqcap B\notin \mathsf{UpCov}_{\Tmc'}(X\sqcap A)$.
\end{proof}

\begin{theorem}
Let \Tmc be a \DL TBox and let $C$ be an arbitrary \DL concept. Deciding whether $D \in \mathsf{UpCov}_{\Tmc}(C)$ can be done in exponential time when $\DL = \ALC$ and in polynomial time when $\DL = \EL$.
\end{theorem}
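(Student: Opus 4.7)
The plan is to exhibit a simple decision procedure whose cost is dominated by a polynomial number of subsumption tests with respect to $\Tmc$, and then invoke the known complexities of subsumption in $\ALC$ (ExpTime) and $\EL$ (PTime).

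First I would unfold the definition of $\mathsf{UpCov}_\Tmc(C)$: for $D$ to belong to $\mathsf{UpCov}_\Tmc(C)$, three conditions must hold, namely (i) $D \in \sub(\Tmc)$, (ii) $C \sqsubseteq_\Tmc D$, and (iii) there is no $D' \in \sub(\Tmc)$ with $C \sqsubset_\Tmc D' \sqsubset_\Tmc D$. Condition (i) can be checked syntactically in polynomial time, since $\sub(\Tmc)$ is computable directly from the axioms of $\Tmc$ (cf.\ Definition~\ref{def:sub}) and has cardinality at most $|\Tmc| + 2$. Condition (ii) requires a single subsumption test.

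The heart of the algorithm is condition (iii). I would enumerate all $D' \in \sub(\Tmc)$ and, for each one, decide whether $C \sqsubset_\Tmc D' \sqsubset_\Tmc D$ holds by performing the four subsumption queries $C \sqsubseteq_\Tmc D'$, $D' \sqsubseteq_\Tmc C$, $D' \sqsubseteq_\Tmc D$, and $D \sqsubseteq_\Tmc D'$ (recall that $X \sqsubset_\Tmc Y$ iff $X \sqsubseteq_\Tmc Y$ and $Y \not\sqsubseteq_\Tmc X$). If any $D'$ witnesses the strict inclusions we reject; otherwise we accept. Since $\mathbf{card}(\sub(\Tmc)) \leq |\Tmc| + 2$, this loop triggers at most $O(|\Tmc|)$ iterations, each using a constant number of subsumption tests, giving $O(|\Tmc|)$ subsumption tests in total (plus the one from condition (ii)).

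Finally I would bound the overall cost by plugging in the known complexities: each subsumption test against $\Tmc$ is decidable in exponential time in $|\Tmc| + |C| + |D|$ when $\DL = \ALC$, and in polynomial time when $\DL = \EL$. A polynomial number of ExpTime tests remains in ExpTime, and a polynomial number of PTime tests remains in PTime, yielding the claimed bounds. I do not foresee a genuine obstacle here; the only point that deserves care is ensuring that, when querying subsumption between $C$ (which may lie outside $\sub(\Tmc)$) and members of $\sub(\Tmc)$, the sizes fed to the subsumption procedure remain polynomial in the size of the original inputs, which is clear since neither $C$ nor $D$ is modified.
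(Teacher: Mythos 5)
Your proposal is correct and follows essentially the same route as the paper's own proof: both reduce the membership test to the initial checks $D \in \sub(\Tmc)$ and $C \sqsubseteq_\Tmc D$, followed by a loop over all candidates in $\sub(\Tmc)$ with four subsumption queries each (your strict-subsumption decomposition matches the paper's conditions (1)--(4)), and then conclude via the linear bound on $\mathbf{card}(\sub(\Tmc))$ and the ExpTime/PTime complexity of subsumption in $\ALC$/$\EL$. No gaps.
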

\begin{proof}
An algorithm goes as follows. If $D \not \in \sub(\Tmc)$ or $C\not\sqsubseteq_\Tmc D$, return {\sf false}. 
Then, for every $E \in \sub(\Tmc)$, check whether: (1)~$C \sqsubseteq_\Tmc E$, 
(2)~$E \sqsubseteq_\Tmc D$, (3)~$E \not\sqsubseteq_\Tmc C$, and (4)~$D \not\sqsubseteq_\Tmc E$. 
If conditions~(1)--(4) are all satisfied, return {\sf false}.
Return {\sf true} after trying all $E \in \sub(\Tmc)$. The routine requires at most $1 + 4\times \mathbf{card}(\sub(\Tmc))$ calls 
to the subroutine for \DL concept subsumption.
Since $\mathbf{card}(\sub(\Tmc))$ is linear in $|\Tmc|$, the overall routine runs in exponential time when $\DL = \ALC$ 
and in polynomial time when $\DL = \EL$.
\end{proof}

The following theorem is instrumental in the proof of Theorem~\ref{th:compl-gamma-upperbound}.
\begin{theorem}\label{th:upcov-complexity-membership}
Let \Tmc be a \DL TBox and let $C$ be a \DL concept. $\mathsf{UpCov}_{\Tmc}(C)$ is computable in 
exponential time when $\DL = \ALC$ and in polynomial time when $\DL = \EL$.
\end{theorem}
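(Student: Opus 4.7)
The plan is to compute $\mathsf{UpCov}_{\Tmc}(C)$ by brute-force enumeration over the candidate concepts, leveraging the membership procedure from the previous theorem. Since by Definition~\ref{def:cover} every element of $\mathsf{UpCov}_{\Tmc}(C)$ lies in $\sub(\Tmc)$, and we know $\mathbf{card}(\sub(\Tmc)) \leq |\Tmc| + 2$, there are only linearly many candidates to test.

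Concretely, I would iterate over each $D \in \sub(\Tmc)$ and invoke the decision procedure from Theorem~\ref{th:upcov-complexity-membership}'s companion result (the one immediately preceding this statement) to determine whether $D \in \mathsf{UpCov}_{\Tmc}(C)$, collecting exactly the positive answers into the output set. That companion result established that each such membership test requires at most $1 + 4 \cdot \mathbf{card}(\sub(\Tmc))$ calls to the underlying \DL subsumption oracle. Since $\sub(\Tmc)$ can be computed directly from the syntactic structure of $\Tmc$ in linear time, the enumeration itself is cheap.

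Putting the pieces together, the total cost is at most $\mathbf{card}(\sub(\Tmc)) \cdot (1 + 4 \cdot \mathbf{card}(\sub(\Tmc)))$ subsumption checks, which is $O(|\Tmc|^2)$ many calls to the subsumption oracle. Since subsumption w.r.t.\ a TBox is decidable in ExpTime for \ALC and in PTime for \EL, a polynomial (in fact quadratic) number of such calls preserves these bounds: the whole procedure runs in exponential time for \ALC and in polynomial time for \EL, yielding the claimed complexities.

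There is no real obstacle here: the statement is essentially a bookkeeping corollary of the preceding membership result combined with the linear bound on $\mathbf{card}(\sub(\Tmc))$. The only subtlety worth mentioning explicitly is that $\sub(\Tmc)$ is computable purely syntactically, so no semantic reasoning is needed to enumerate the candidates; all the reasoning cost is concentrated in the subsumption calls, whose number stays polynomial in $|\Tmc|$.
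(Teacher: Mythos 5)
Your proposal is correct and follows essentially the same route as the paper: enumerate the linearly many candidates $D \in \sub(\Tmc)$, test each with the membership procedure of the preceding theorem, and collect the positive answers, which keeps the total number of subsumption calls polynomial in $|\Tmc|$. The paper's own proof is just a terser statement of exactly this argument.
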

\begin{proof}
It suffices to check for every $D \in \sub(\Tmc)$ whether $D \in \mathsf{UpCov}_\Tmc(C)$ and collect 
those concepts for which the answer is positive. Since $\mathbf{card}(\sub(\Tmc))$ is linear in the size of \Tmc, the result holds. 
\end{proof}
%
\begin{lemma}
\label{lem:technical2}
Let \Tmc be a \DL TBox, $C$ a \DL concept, and $X\notin\sub(\Tmc)$.
Define $\Tmc' := \Tmc \cup \{X \equiv C \}$. 
If $D \in \sub(\Tmc)$ then $D \in \mathsf{UpCov}_{\Tmc}(C)$ iff $D \in \mathsf{UpCov}_{\Tmc'}(C)$.
\end{lemma}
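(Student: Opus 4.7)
The plan is to reduce both directions of the biconditional to a conservative-extension property of $\Tmc'$ over $\Tmc$. Since $X$ is chosen fresh (in particular not appearing in $\Tmc$, $C$, or $D$), every model $\Imc$ of $\Tmc$ extends uniquely to a model of $\Tmc'$ by setting $X^\Imc := C^\Imc$, while every model of $\Tmc'$ restricts trivially to a model of $\Tmc$. Consequently, for any concepts $E, F$ not mentioning $X$, $E \sqsubseteq_\Tmc F$ holds iff $E \sqsubseteq_{\Tmc'} F$. I would also record the decomposition $\sub(\Tmc') = \sub(\Tmc) \cup \{X\} \cup \sub(C)$, obtained by unfolding $\sub$ on the two new inclusions $X \sqsubseteq C$ and $C \sqsubseteq X$.

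The $(\Leftarrow)$ direction is short: assume $D \in \sub(\Tmc)$ and $D \in \mathsf{UpCov}_{\Tmc'}(C)$. Conservative extension transports $C \sqsubseteq_{\Tmc'} D$ back to $C \sqsubseteq_\Tmc D$. Since $\sub(\Tmc) \subseteq \sub(\Tmc')$, any $D' \in \sub(\Tmc)$ witnessing $C \sqsubset_\Tmc D' \sqsubset_\Tmc D$ would, again by conservative extension, be a strict $\Tmc'$-intermediate and contradict the hypothesis. Hence $D \in \mathsf{UpCov}_\Tmc(C)$.

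For $(\Rightarrow)$, assuming $D \in \mathsf{UpCov}_\Tmc(C)$ I would rule out every potential strict intermediate $D' \in \sub(\Tmc')$ using the decomposition above. The case $D' \in \sub(\Tmc)$ is dispatched by conservative extension together with $D \in \mathsf{UpCov}_\Tmc(C)$; the case $D' = X$ is excluded because the axiom $X \equiv C$ forces $X \equiv_{\Tmc'} C$, precluding $C \sqsubset_{\Tmc'} X$. The main obstacle is the remaining case $D' \in \sub(C) \setminus (\sub(\Tmc) \cup \{X\})$: a genuinely new subconcept of $C$ could a priori lie strictly between $C$ and $D$ in $\Tmc'$. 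My plan for this case is a structural induction on $C$ showing that any such $D'$ is $\Tmc'$-equivalent to a concept already in $\sub(\Tmc) \cup \{X\}$, so that Lemma~\ref{lem:gen}.\ref{item:lem-sem-stable-upcover} (semantic stability of cover) collapses it to one of the previous two cases; alternatively one may extend the trick used in Lemma~\ref{lem:bot}, building a model of $\Tmc'$ in which $D'$ receives a controlled interpretation that blocks the required strict inclusion.
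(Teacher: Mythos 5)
Your $(\Leftarrow)$ direction and the first two cases of your $(\Rightarrow)$ direction are correct and essentially follow the paper's argument, with the conservativity step (extend any model of $\Tmc$ to a model of $\Tmc'$ by interpreting the fresh name $X$ as $C$) spelled out more explicitly than in the paper, which simply asserts the transfer of (strict) subsumptions between $\Tmc$ and $\Tmc'$. The genuine gap is exactly the case you flag and then leave as a plan: neither of your two proposed strategies can close it, because under your (natural) decomposition $\sub(\Tmc')=\sub(\Tmc)\cup\{X\}\cup\sub(C)$ the statement to be proved is false for arbitrary $C$. Take $\Tmc=\emptyset$ (so $\sub(\Tmc)=\{\top,\bot\}$), $C=A\sqcap B$ for concept names $A,B$, and $D=\top$. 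Then $D\in\mathsf{UpCov}_{\Tmc}(C)$, but for $\Tmc'=\{X\equiv A\sqcap B\}$ the subconcept $A\in\sub(C)\subseteq\sub(\Tmc')$ satisfies $C\sqsubset_{\Tmc'}A\sqsubset_{\Tmc'}\top$, so $D\notin\mathsf{UpCov}_{\Tmc'}(C)$. Moreover $A$ is $\Tmc'$-equivalent to none of $\top$, $\bot$, $X$ (the latter being equivalent to $A\sqcap B$), so the invariant of your proposed structural induction fails, and no Lemma~\ref{lem:bot}-style model construction can refute a strict chain that is genuinely entailed.

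What this shows is that the lemma must be read with the implicit assumption that $\sub(C)\subseteq\sub(\Tmc)$ (for instance, $C\in\sub(\Tmc)$), which is precisely the content of the identity $\sub(\Tmc')=\sub(\Tmc)\cup\{X\}$ that the paper's proof asserts in its first line and never questions. Under that assumption your problematic third case is vacuous, and your first two cases (conservativity for $D'\in\sub(\Tmc)$, and $X\equiv_{\Tmc'}C$ excluding $D'=X$) already constitute a complete proof, indeed a slightly more careful one than the paper's, since you justify the back-and-forth transfer of strict subsumptions via the fresh-name model extension rather than asserting it. So your instinct that the new subconcepts of $C$ are ``the main obstacle'' is sound, but the obstacle is not a technicality to be discharged by induction or by a controlled model construction; it has to be excluded by hypothesis, and the fix to your write-up is to state that restriction up front rather than to attempt the collapse you conjecture.
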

\begin{proof}
We have $\sub(\Tmc') {=} \sub(\Tmc) \cup \{X\}$. Let $D \in \sub(\Tmc)$.
Suppose $D \in \mathsf{UpCov}_{\Tmc'}(C)$. Then $C \sqsubseteq_{\Tmc'} D$ and there is no 
$E \in \sub(\Tmc')$ such that $C \sqsubset_{\Tmc'} E \sqsubset_{\Tmc'} D$. We thus have 
$C \sqsubseteq_{\Tmc} D$.
Since $\sub(\Tmc) \subset \sub(\Tmc')$ there is no $E \in \sub(\Tmc)$ such that 
$C \sqsubset_{\Tmc} E \sqsubset_{\Tmc} D$.

Let $D \in \mathsf{UpCov}_{\Tmc}(C)$. Then $C \sqsubseteq_\Tmc D$ and 
$C \sqsubseteq_{\Tmc'} D$.
Moreover, there is no $E \in \sub(\Tmc)$ with $C \sqsubset_{\Tmc} E \sqsubset_{\Tmc} D$. So there is no $E \in \sub(\Tmc)$ such that $C \sqsubset_{\Tmc'} E \sqsubset_{\Tmc'} D$.

Since $X \equiv_{\Tmc'} C$, it is not the case that $C \sqsubset_{\Tmc'} X \sqsubset_{\Tmc'} D$.
Since $\sub(\Tmc') = \sub(\Tmc) \cup \{X\}$, there is no $E \in \sub(\Tmc')$ such that $C \sqsubset_{\Tmc'} E \sqsubset_{\Tmc'} D$. Then $D \in \mathsf{UpCov}_{\Tmc'}(C)$.
\end{proof}

\begin{theorem}\label{th:compl-gamma-lowerbound}
Deciding {\sc $\gamma_{\Tmc}$-membership} is as hard as deciding whether $D\in\mathsf{UpCov}_{\Tmc}(C)$.
\end{theorem}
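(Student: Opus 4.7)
The plan is to give a polynomial reduction from the $\mathsf{UpCov}_\Tmc$-membership problem to $\gamma$-membership. The key observation is that, by inspection of Table~\ref{tab:abstract-refop}, the operator $\zeta_{\refine,\corefine}$ evaluated at a concept name $A$ returns exactly $\refine(A)$. Consequently, for any concept name $A$, $\gamma_\Tmc(A) = \mathsf{UpCov}_\Tmc(A)$. So if the query concept $C$ were itself a concept name, we would already be done. The obstacle is that $C$ can be an arbitrary (complex) concept, and on complex concepts $\gamma_\Tmc$ also produces the structural refinements coming from the other clauses of Table~\ref{tab:abstract-refop}, so $\gamma_\Tmc(C)$ is in general strictly larger than $\mathsf{UpCov}_\Tmc(C)$.

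To get around this, I would \emph{abbreviate} $C$ by a fresh concept name. Given an instance $(\Tmc, C, D)$ of $\mathsf{UpCov}_\Tmc$-membership, first check whether $D \in \sub(\Tmc)$; if not, answer \textsf{no} (since $\mathsf{UpCov}_\Tmc(C) \subseteq \sub(\Tmc)$ by definition). Otherwise pick a fresh concept name $X \notin \sub(\Tmc)$ and form
\[
\Tmc' \;:=\; \Tmc \cup \{ X \equiv C \}.
\]
This is a polynomial-time construction. I then claim that
\[
D \in \mathsf{UpCov}_\Tmc(C) \quad\Longleftrightarrow\quad D \in \gamma_{\Tmc'}(X),
\]
which, since $X$ is a concept name, is exactly the $\gamma_{\Tmc'}$-membership problem.

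The equivalence is established by a short chain. First, because $X$ is a concept name, Table~\ref{tab:abstract-refop} gives $\gamma_{\Tmc'}(X) = \mathsf{UpCov}_{\Tmc'}(X)$. Second, since $X \equiv_{\Tmc'} C$, the predicates $X \sqsubseteq_{\Tmc'} D'$ and $X \sqsubset_{\Tmc'} D'$ are equivalent to $C \sqsubseteq_{\Tmc'} D'$ and $C \sqsubset_{\Tmc'} D'$ respectively, so directly from Definition~\ref{def:cover} one gets $\mathsf{UpCov}_{\Tmc'}(X) = \mathsf{UpCov}_{\Tmc'}(C)$. Third, Lemma~\ref{lem:technical2} tells us that for $D \in \sub(\Tmc)$ we have $D \in \mathsf{UpCov}_{\Tmc'}(C) \Leftrightarrow D \in \mathsf{UpCov}_\Tmc(C)$. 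Chaining these three equivalences proves the claim.

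The only delicate point — and the one I would flag as the main thing to double-check — is the equality $\mathsf{UpCov}_{\Tmc'}(X) = \mathsf{UpCov}_{\Tmc'}(C)$: it is semantically obvious from $X \equiv_{\Tmc'} C$, but uses the fact that cover sets depend on $C$ only through its semantic behaviour in $\Tmc'$ (which is immediate from Definition~\ref{def:cover} since the defining conditions mention $C$ only via $\sqsubseteq_{\Tmc'}$ and $\sqsubset_{\Tmc'}$). Nothing else in the argument goes beyond bookkeeping, so the whole reduction runs in polynomial time and transfers hardness from $\mathsf{UpCov}_\Tmc$-membership to $\gamma_\Tmc$-membership, as required.
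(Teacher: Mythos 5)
Your proposal is correct and follows essentially the same route as the paper's own proof: the identical reduction $\Tmc' := \Tmc \cup \{X \equiv C\}$ with a fresh name $X$, then chaining $\gamma_{\Tmc'}(X) = \mathsf{UpCov}_{\Tmc'}(X)$, the equivalence $\mathsf{UpCov}_{\Tmc'}(X) = \mathsf{UpCov}_{\Tmc'}(C)$ from $X \equiv_{\Tmc'} C$, and Lemma~\ref{lem:technical2}. The only differences are cosmetic: you justify the middle step directly from Definition~\ref{def:cover} instead of citing Lemma~\ref{lem:gen}.\ref{item:lem-sem-stable-upcover}, and you add the harmless preliminary check $D \in \sub(\Tmc)$ to dispose of the trivial cases.
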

\begin{proof}
Let \Tmc be a \DL TBox, $C$ a concept, and $X\notin\sub(\Tmc)$.
Define $\Tmc' := \Tmc \cup \{X \equiv C \}$.
For every concept $D \not = X$, we show that $D \in \mathsf{UpCov}_{\Tmc}(C)$ iff 
$D \in \gamma_{\Tmc'}(X)$.

By Lemma~\ref{lem:technical2}, $D \in \mathsf{UpCov}_{\Tmc}(C)$ iff $D \in \mathsf{UpCov}_{\Tmc'}(C)$.
Since $X \equiv_{\Tmc'} C$, Lemma~\ref{lem:gen}.\ref{item:lem-sem-stable-upcover} yields 
$D \in \mathsf{UpCov}_{\Tmc'}(C)$ iff $D \in \mathsf{UpCov}_{\Tmc'}(X)$.
As $X$ is a concept name, by definition of $\gamma$ we have 
$D \in \mathsf{UpCov}_{\Tmc'}(X)$ iff $D \in \gamma_{\Tmc'}(X)$.
\end{proof}

\begin{theorem}\label{th:compl-gamma-upperbound}
Let \Tmc be a \DL TBox and $C$ a concept. {\sc $\gamma_{\Tmc}$-membership} can be decided 
in exponential time when $\DL = \ALC$ and in polynomial time when $\DL = \EL$.
\end{theorem}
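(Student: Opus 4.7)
The plan is to exhibit a recursive decision procedure $\textsf{InGamma}(C,D)$ that mirrors the inductive definition of $\zeta_{\mathsf{UpCov}_\Tmc,\mathsf{DownCov}_\Tmc}$ from Table~\ref{tab:abstract-refop} and to argue that it performs at most a linear number of cover‑membership tests, each of which fits within the required complexity bound by Theorem~\ref{th:upcov-complexity-membership}.

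First I would case‑split on the top‑level shape of $C$ (assumed in negation normal form). For $C\in N_C\cup\{\top,\bot\}$, deciding $D\in\gamma_\Tmc(C)$ reduces to a single test $D\in\mathsf{UpCov}_\Tmc(C)$. For $C=\lnot A$ with $A\in N_C$, it suffices to check whether $D\in\mathsf{UpCov}_\Tmc(\lnot A)$ or $\mathsf{nnf}(\lnot D)\in\mathsf{DownCov}_\Tmc(A)$, i.e.\ two cover tests. For $C=C_1\sqcap C_2$, we check $D\in\mathsf{UpCov}_\Tmc(C)$ and, failing that, require $D$ to be syntactically a conjunction $D_1\sqcap D_2$ and recurse: either $D_2$ equals $C_2$ and $D_1\in\gamma_\Tmc(C_1)$, or $D_1$ equals $C_1$ and $D_2\in\gamma_\Tmc(C_2)$. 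The disjunction case is symmetric, and the quantifier cases $\forall R.C_0,\,\exists R.C_0$ require $D$ to have the same top role and quantifier and recurse on the sub‑filler. A symmetric version of Theorem~\ref{th:upcov-complexity-membership} (invoked in the proof of the corollary for $\rho_\Tmc$‑membership) ensures that every $\mathsf{DownCov}_\Tmc$‑membership test meets the same complexity bound as an $\mathsf{UpCov}_\Tmc$ test.

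Each recursive invocation is on a pair whose first component is a strict subconcept of $C$, so the recursion tree is bounded by the syntactic tree of $C$. Hence at most $O(|C|)$ cover tests are performed, plus $O(|C|\cdot|D|)$ work for the syntactic equality comparisons. Since each cover test runs in exponential time for $\ALC$ and polynomial time for $\EL$ by Theorem~\ref{th:upcov-complexity-membership}, the overall procedure respects the claimed bounds.

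The main obstacle is establishing completeness of the syntactic case analysis: I would need to verify, directly from Table~\ref{tab:abstract-refop}, that every $D\in\gamma_\Tmc(C)$ is produced either by the $\mathsf{UpCov}_\Tmc(C)$ "escape hatch" at the root, or by refining exactly one immediate subconcept while leaving the other literally unchanged. This rigid shape of the inductive definition is precisely what rules out branching ambiguity in the syntactic match of $D$ against $C$, and therefore what prevents the recursion from blowing up beyond a linear number of cover tests.
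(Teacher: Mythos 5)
Your procedure is essentially the paper's own argument: the paper's proof is exactly the sketch that $\gamma_\Tmc$-membership reduces to a linear number of cover computations, each within the bounds of Theorem~\ref{th:upcov-complexity-membership}, and your recursion spells out why the number of cover tests is linear in $|C|$ (at a $\sqcap$/$\sqcup$ node the two recursive branches fire only under the syntactic equality conditions, so at most one call per node of the syntax tree of $C$), together with the completeness observation that Table~\ref{tab:abstract-refop} refines exactly one immediate subconcept at a time or falls back to $\mathsf{UpCov}_\Tmc$.

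One local slip worth fixing: in the $C=\lnot A$ case, testing $\mathsf{nnf}(\lnot D)\in\mathsf{DownCov}_\Tmc(A)$ is not equivalent to $D\in\{\mathsf{nnf}(\lnot E)\mid E\in\mathsf{DownCov}_\Tmc(A)\}$, because $\mathsf{DownCov}_\Tmc(A)\subseteq\sub(\Tmc)$ may contain concepts that are not in negation normal form and $\mathsf{nnf}(\lnot\,\cdot)$ is not a syntactic involution. For instance, if $E=\lnot(A'\sqcap B')\in\mathsf{DownCov}_\Tmc(A)$, then $D=\mathsf{nnf}(\lnot E)=A'\sqcap B'\in\gamma_\Tmc(\lnot A)$, but $\mathsf{nnf}(\lnot D)=\lnot A'\sqcup\lnot B'$ need not belong to $\sub(\Tmc)$, so your test can return a false negative. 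The repair is immediate and cost-free: compute the set $\mathsf{DownCov}_\Tmc(A)$ (one cover computation, covered by the symmetric version of Theorem~\ref{th:upcov-complexity-membership} you already invoke) and check whether $D$ equals $\mathsf{nnf}(\lnot E)$ for some $E$ in it; the overall complexity claim is unaffected.
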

\begin{proof}
We can decide whether $\gamma_{\Tmc}(C)$ contains a particular concept by computing only a linear 
number of times $\mathsf{UpCov}_{\Tmc}(C')$, where $|C'|$ is linearly bounded by $|C'| + |\Tmc|$.
Theorem~\ref{th:upcov-complexity-membership} tells us that each of these computations can be done in 
exponential time when $\DL = \ALC$ and in polynomial time when $\DL = \EL$. This yields an exponential 
time procedure when $\DL = \ALC$ and a polynomial time procedure when $\DL = \EL$.
\end{proof}

\section{Repairing Ontologies}
%
Our refinement operators can be used as components of a method for repairing inconsistent ontologies by weakening, instead of removing, problematic axioms.

Given an inconsistent ontology $O$, we proceed as described in Algorithm \ref{algo:repair_weaken}. 
Briefly, we first need to find a consistent subontology $O^\text{ref}$ of $O$ to serve as \emph{reference ontology} in order to be able to compute a non-trivial upcover and downcover. 
The \emph{brave} approach (which we use in our evaluation) picks a random maximally consistent subset of $O$ and chooses it as reference ontology $O^\text{ref}$. The \emph{cautious} approach takes as $O^\text{ref}$ the intersection of all maximally consistent subsets~\cite{LuPe14,LLRRS10}. 
While the brave approach is faster to compute and still guarantees to find solutions, the cautious approach 
has the advantage of not excluding certain repairs a priori. However, it also returns, e.g., a much impoverished upcover.

Once a reference ontology $O^\text{ref}$ has been chosen, and as long as $O$ is inconsistent, we select a ``bad axiom'' and replace it with a random weakening of it with respect to $O^\text{ref}$. 
In view of evaluation, we consider two variants of the subprocedure $\text{FindBadAxiom}(O)$. The first variant (`mis') randomly samples a number of minimally inconsistent subsets $I_1, I_2, \ldots I_k \subseteq O$ and returns one axiom from the ones occurring the most often, i.e., an axiom from the set 
$\argmax_{\phi \in O} (\mathbf{card}(\{j \mid \phi \in I_j \text{ and } 1 \leq j \leq k\}))$. 
The second variant (`rand') of $\text{FindBadAxiom}(O)$ merely returns an axiom in $O$ at random.
%
%

The set of all weakenings of an axiom with respect to a reference ontology is defined as follows:
\begin{definition}[Axiom weakening] Given a subsumption axiom $C \sqsubseteq D$ of $O$, the set of (least) \emph{weakenings} of $C \sqsubseteq D$ w.r.t.\ $O$, denoted by $g_{O}(C \sqsubseteq D)$ is the
set of all axioms $C'\sqsubseteq D'$ such that
\[
C' \in \rho_{O}(C) \text{ and } D' \in \gamma_{O}(D) \enspace .
\]
Given an assertional axiom $C(a)$ of $O$, the set of (least) \emph{weakenings} of $C(a)$, denoted $g_{O}(C(a))$ is the set of all axioms $C'(a)$ such that
\[
C' \in \gamma_O(C) \enspace .
\]
\end{definition}
The subprocedure $\text{WeakenAxiom}(\phi, O^\text{ref})$ randomly returns one axiom in $g_O(\phi)$.
For every subsumption or assertional axiom $\phi$, the axioms in the set $g_{O}(\phi)$ are indeed weaker than $\phi$.

\begin{lemma} For every subsumption or assertional axiom $\phi$, if $\phi' \in g_{O}(\phi)$, then $\phi \models_{O} \phi'$.
\end{lemma}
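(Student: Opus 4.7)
The plan is to do a straightforward case split on whether $\phi$ is a subsumption axiom or an assertional axiom, and in each case reduce the claim directly to the \textbf{generalisation} and \textbf{specialisation} items of Lemma~\ref{lem:gen}. Recall that $\phi \models_O \phi'$ means that every model of $O \cup \{\phi\}$ is also a model of $\phi'$, so it suffices to take an arbitrary such model \Imc and show that $\phi'$ holds in it.

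First, suppose $\phi = C \sqsubseteq D$ and $\phi' = C' \sqsubseteq D'$ with $C' \in \rho_O(C)$ and $D' \in \gamma_O(D)$. By Lemma~\ref{lem:gen}.\ref{item:generalisation} (specialisation clause), $C' \sqsubseteq_O C$, and by Lemma~\ref{lem:gen}.\ref{item:generalisation} (generalisation clause), $D \sqsubseteq_O D'$. Let \Imc be any model of $O \cup \{\phi\}$. Since \Imc is a model of $O$, we have $(C')^\Imc \subseteq C^\Imc$ and $D^\Imc \subseteq (D')^\Imc$; since \Imc also satisfies $\phi$, we have $C^\Imc \subseteq D^\Imc$. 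Chaining the three inclusions yields $(C')^\Imc \subseteq (D')^\Imc$, which is exactly $\phi'$ holding in \Imc.

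Second, suppose $\phi = C(a)$ and $\phi' = C'(a)$ with $C' \in \gamma_O(C)$. By Lemma~\ref{lem:gen}.\ref{item:generalisation}, $C \sqsubseteq_O C'$. For any model \Imc of $O \cup \{\phi\}$, we have $a^\Imc \in C^\Imc \subseteq (C')^\Imc$, so $\phi'$ holds in \Imc.

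There is no real obstacle here: the lemma is essentially a bookkeeping consequence of the semantic guarantees already established for $\gammaT$ and $\rhoT$ in Lemma~\ref{lem:gen}, combined with transitivity of subsumption. The only mild point to be careful about is that the inclusions supplied by Lemma~\ref{lem:gen} hold with respect to $O$ (i.e.\ in every model of $O$), which is exactly what we need since the assumed model \Imc is, in particular, a model of $O$.
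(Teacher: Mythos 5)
Your proof is correct and follows essentially the same route as the paper's: both reduce the claim to Lemma~\ref{lem:gen}.\ref{item:generalisation} (the generalisation/specialisation guarantees) and then conclude by transitivity of subsumption, which you simply spell out model-theoretically. No gaps.
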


\begin{proof}
Suppose $C' \sqsubseteq D' \in g_{O}(C \sqsubseteq D)$. Then, by definition of $g_{O}$ and Lemma~\ref{lem:gen}.\ref{item:generalisation}, $C'\sqsubseteq C$ and $D \sqsubseteq D'$ are inferred from $O$. Thus, by transitivity of subsumption, we obtain that $C \sqsubseteq D \models_{O} C' \sqsubseteq D'$. For the weakening of assertions, the result follows immediately from Lemma~\ref{lem:gen}.\ref{item:generalisation} again.
\end{proof}

\begin{algorithm}[t]
\begin{algorithmic}
\State $O^\text{ref} \gets \text{MaximallyConsistent($O$)}$
\While{$O$ is inconsistent}
\State BadAx $\gets$ FindBadAxiom($O$)
\State WeakerAx $\gets$ WeakenAxiom(BadAx, $O^\text{ref}$)
\State $O \gets O \setminus \{\text{BadAx}\} \cup \{\text{WeakerAx}\}$
\EndWhile
\State Return $O$
\end{algorithmic}
\caption{RepairOntologyWeaken($O$)}
	\label{algo:repair_weaken}
\end{algorithm}

Clearly, substituting an axiom $\phi$ with one axiom from $g_O(\phi)$ cannot diminish the set of interpretations of an ontology.
By Lemma~\ref{lem:gen}.\ref{item:trivial-generalisability}, any subsumption axiom is a finite number of refinement steps away from the trivial axiom $\bot \sqsubseteq \top$. Any assertional axiom $C(a)$ is also a finite number of generalisations away from the trivial assertion $\top(a)$.
It follows that by repeatedly replacing an axiom with one of its weakenings, the weakening procedure will eventually obtain an ontology with some interpretations. 
Hence, the algorithm will eventually terminate.

In the next section, we compare Algorithm \ref{algo:repair_weaken} with Algorithm \ref{algo:repair_remove}, which merely removes bad axioms until an ontology becomes consistent. We do so for both variants `mis' and `rand' of $\text{FindBadAxiom}(O)$.
As we will see, Algorithm~\ref{algo:repair_weaken} generally allows us to obtain consistent ontologies which retain significantly more of the informational content of the axioms of the original (and inconsistent) ontology than the ones obtained through Algorithm~\ref{algo:repair_remove}. This is most significant with the `mis' variant of $\text{FindBadAxiom}(O)$ which reliably pinpoints the problematic axioms.


\begin{algorithm}[t]
\begin{algorithmic}
\While{$O$ is inconsistent}
\State BadAx $\gets$ FindBadAxiom($O$)
\State $O \gets O \setminus \{\text{BadAx}\}$
\EndWhile
\State Return $O$
\end{algorithmic}
\caption{RepairOntologyRemove($O$)}
	\label{algo:repair_remove}
\end{algorithm}
\begin{table}[t]
	\caption{BioPortal ontologies considered for experimental validation}
	\label{table:ontologies}
\resizebox{\hsize}{!}{
	\begin{tabular}{l l}
		Abbreviation & Name\\ 
		\hline
		bctt & Behaviour Change Technique Taxonomy\\
		co-wheat & Wheat Trait Ontology \\ 
		elig & Eligibility Feature Hierarchy\\
		hom & Homology and Related Concepts in Biology\\
		icd11 & Body System Terms from ICD11\\
		ofsmr & Open Food Safety Model Repository\\
		ogr & Ontology of Geographical Region\\
		pe & Pulmonary Embolism Ontology\\
		taxrank & Taxonomic Rank Vocabulary\\
		xeo & XEML Environment Ontology
	\end{tabular}
    }
\end{table}
\begin{figure}[t]
{\includegraphics[width=1\columnwidth]{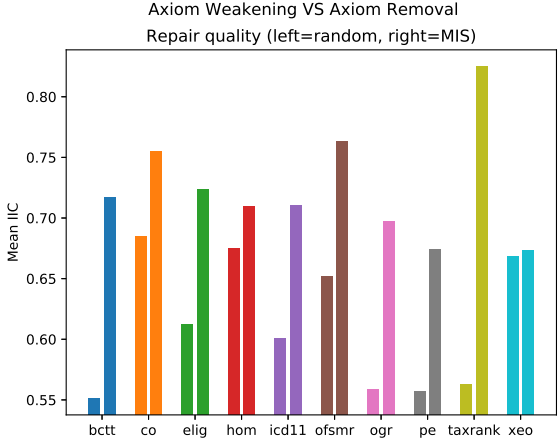}}
\caption{Comparing weakening-based ontology repair with removal-based ontology repair. 
Mean \qual of weakening-based against removal-based repair for each ontology, when choosing axioms at random (left) or by sampling minimally inconsistent sets (right).}
\label{fig:evaluation}
\end{figure}
\begin{table}
\begin{center}
	\begin{tabular}{l | c c }
		& Random & MIS\\
		\hline
		bctt & 0.55 (0.35) & \textbf{0.72 (0.36)}\\
		co-wheat & \textbf{0.69 (0.29)} & \textbf{0.76 (0.31)}\\
		elig & \textbf{0.61 (0.30)} & \textbf{0.72 (0.27)}\\
		hom & \textbf{0.68 (0.26)} & \textbf{0.71 (0.31)}\\
		icd11 & 0.60 (0.30) & \textbf{0.71 (0.40)}\\
		ofsmr & \textbf{0.65 (0.31)} & \textbf{0.76 (0.29)}\\
		ogr & 0.56 (0.32) & \textbf{0.70 (0.35)}\\
		pe & 0.56 (0.33) & \textbf{0.67 (0.41)}\\
		taxrank & 0.56 (0.31) & \textbf{0.82 (0.36)}\\
		xeo & \textbf{0.67 (0.29)} & \textbf{0.67 (0.34)}
	\end{tabular}
    \end{center}
    \caption{Mean and standard deviation (in parentheses) of \qual between RepairOntologyWeaken and RepairOntologyRemove, both when choosing axioms at random (left column) and by sampling minimally inconsistent sets (right). Bolded values are significant $(p < 0.05)$ with respect to both Wilcoxon and T-test with Holm-Bonferroni correction; non-bolded values were not significant for either.}\label{table:results}
\end{table}



\section{Evaluation}
The question of which one of two consistent repairs $O_1$ and $O_2$ of a given inconsistent ontology $O$ is preferable is not, in general, well-defined. In this work, we compare two such repairs by taking into account the corresponding \emph{inferred class hierarchies}. To this end, we define:
\[
	\Inf(O_i) = \{A \sqsubseteq B : A, B \in N_C, O_i \models A \sqsubseteq B\} \enspace.
\]
The intuition behind the choice of measure is that if $\mathbf{card}(\Inf(O_1) \setminus \Inf(O_2))>\mathbf{card}(\Inf(O_2) \setminus \Inf(O_1))$ (that is, if there exist more subsumptions between classes which can be inferred in $O_1$ but not in $O_2$ than vice versa) then $O_1$ is to be preferred to $O_2$. Furthermore, class subsumptions, which can be inferred from both $O_1$ or $O_2$, should be of no consequence to determine which repaired ontology is preferable. 
That is, 
%
%
whenever $\Inf(O_1) \subseteq \Inf(O'_1)$, $\Inf(O_2) \subseteq \Inf(O'_2)$ and $\Inf(O'_1) \setminus \Inf(O_1) = \Inf(O'_2) \setminus \Inf(O_2)$ it should hold that the quality of $O_1$ with respect to $O_2$ is the same as the quality of $O'_1$ with respect to $O'_2$. 
%
Thus, we 
define the following measure to compare the inferable information content of two ontologies. 
\begin{definition}
    Let $O_1$ and $O_2$ be two consistent ontologies. If $\Inf(O_1) \not = \Inf(O_2)$, we define the \emph{inferable information content} $\qual(O_1, O_2)$ of $O_1$ w.r.t.\ $O_2$ as $\qual(O_1, O_2) = $
	\[
    \frac{\mathbf{card}(\Inf(O_1) \setminus \Inf(O_2))}{\mathbf{card}(\Inf(O_1) \setminus \Inf(O_2))  + \mathbf{card}(\Inf(O_2) \setminus \Inf(O_1))} \enspace ;
	\]
	if instead $\Inf(O_1) = \Inf(O_2)$, we set $\qual(O_1, O_2) =0.5$. 
\end{definition}
It is readily seen that this definition satisfies the two conditions mentioned above. Furthermore, 
the following properties hold: 
\begin{enumerate}
	\item $\qual(O_1, O_2) \in [0,1]$; 
	\item $\qual(O_1, O_2) = 1 - \qual(O_2, O_1)$; 
	\item $\qual(O_1, O_2) = 0.5$ if and only if $\mathbf{card}(\Inf(O_1)) = \mathbf{card}(\Inf(O_2))$; 
	\item $\qual(O_1, O_2) = 1$ if and only if $\Inf(O_2) \subset \Inf(O_1)$; 
	\item $\qual(O_1, O_2) > 0.5$ if and only if $\mathbf{card}(\Inf(O_1) \setminus \Inf(O_2)) > \mathbf{card}(\Inf(O_2) \setminus \Inf(O_1))$.
\end{enumerate}
Although this is by no means the only possible measure for comparing two ontologies~\cite{tartir2005ontoqa,Alani:2006:ROA:2127045.2127047,vrandevcic2007design,vrandevcic2009ontology}, these properties suggest that our definition captures a notion of ``quality'' that is meaningful for our intended application: in particular, if for two proposed repairs $O_1, O_2$ of an inconsistent ontology $O$ we have $\qual(O_1, O_2) > 0.5$, then there are more class subsumptions which can be inferred in $O_1$ but not in $O_2$ than vice versa, and hence---all other things being equal---$O_1$ is a better repair of $O$ than $O_2$.

One possible criticism of our definition of $\qual(O_1, O_2)$ is that its value depends only on $\Inf(O_1)$ and $\Inf(O_2)$: if $O_1$ and $O_2$ differ only w.r.t.\ subsumptions between complex concepts, then 
$\qual(O_1, O_2) = 0.5$ (even though the implications of $O_1$ might still be considerably richer than those of $O_2$). On the other hand, focusing on atomic subsumptions makes also conceptual sense, as these are the ones that our inconsistent ontology---as well as the proposed repairs---discuss about. It is, in any case, certainly true that our measure is fairly \emph{coarse}: if $\qual(O_1, O_2)$ is significantly greater than $0.5$ there are good grounds to claim that $O_1$ is a better repair of $O$ than $O_2$ is, but it may easily be
that repair candidates between which our measure cannot discriminate are nonetheless of different quality.

To empirically test whether weakening axioms is a better approach to ontology repair than removing them, we tested our approach on ten ontologies from BioPortal~\cite{bioportal}, expressed in \ALC (see Table~\ref{table:ontologies}). On average the ontologies have 105 logical axioms and 90 classes. We compared the performance of RepairOntologyWeaken (Algorithm~\ref{algo:repair_weaken}) with the one of the non weakening-based RepairOntologyRemove (Algorithm~\ref{algo:repair_remove}) by first making the ontologies inconsistent through the addition of random axioms, then attempting to repair them through the two algorithms (using the original ontology as the reference), and then computing \qual.\footnote{Limited to the `mis' variant of \text{FindBadAxiom}, the prototype implementation of \text{RepairOntologyWeaken}, \text{RepairOntologyRemove}, and the \qual measure is provided as supplemental material.} 
This procedure has the following rationale: one may think that the axioms added constitute some new claims made concerning the relationships between the classes of the ontology, which however unfortunately made it inconsistent. It is thus desirable to fix this inconsistency while preserving as much as possible of the informational content of these axioms and of the other axioms in the ontology.

The procedure was repeated one hundred times per ontology, selecting the axioms to weaken or remove by sampling minimally inconsistent sets, and one further hundred times selecting the axioms to remove or weaken completely randomly. 
We tested the significance of our results through both Wilcoxon signed-rank tests and T-tests, applying the Holm-Bonferroni correction for multiple comparison, with a p-value threshold of $0.05$. 

Figure~\ref{fig:evaluation} and Table~\ref{table:results} summarise the results of our experiments. When choosing the axioms to weaken or remove through sampling minimally inconsistent sets, the means (in the case of the T-test) and medians (in the case of the Wilcoxon test) of the \qual 
for \text{RepairOntologyWeaken} against \text{RepairOntologyRemove} were all significantly greater than $0.5$ for all ontologies. This confirms that our repair-by-weakening technique is able to preserve more of the informational content of axioms than repair-by-removal techniques. When selecting the axioms to repair randomly, on the other hand, this was not always the case, as shown in Table \ref{table:results}. 
This illustrates how our weakening-approach on ontology repair reliably constitutes an improvement over removal-based ontology repair only when problematic axioms can be reliably pinpointed. 

Figure~\ref{fig:evaluation} highlights the effect of choosing the axioms to repair or remove randomly rather than through sampling inconsistent sets. While the difference is not statistically significant for all ontologies,\footnote{For instance, w.r.t.\ the Wilcoxon test it is statistically significant for \emph{bctt}, \emph{elig}, \emph{ogr}, \emph{pe} and \emph{taxrank}, but not for the other five ontologies.} we observe that the quality of the repair compared to the corresponding removal is always improved by choosing the axioms to repair via sampling. The natural next step in this line of investigation would consist in evaluating the effect of varying the number of minimally inconsistent sets sampled by FindBadAxiom, which for these experiments was set to one tenth of the ontology size.

To summarise, the main conclusion of our experiments is that, when problematic axioms can be reliably identified, our approach is better able to preserve the informational content of inconsistent ontologies than the corresponding repair-by-removal method.

\section{Related Work}
Refinements operators were also discussed in the context of inductive logic programming~\cite{laag98}, and formalised in description logics for concept learning~\cite{LeHi10}. The refinement operators used by our weakening approach were introduced in~\cite{amai2016}, and further analysed in~\cite{dl2017-coherence} in relation to incoherence detection. They were not previously applied to ontology repair. 

The problem of identifying and repairing inconsistencies in ontologies has received much attention in recent years. Our approach differs from many other works in the area, see for instance~\cite{ScCo03,kalyanpur2005debugging,kalyanpur2006repairing,BaPS07,haase2007analysis}, in that---rather than removing the problematic axioms altogether---we attempt to repair the ontology by replacing them with weakened rewritings. On the one hand, our method requires the choice of a (consistent) \emph{reference ontology} with respect to which one can compute the weakenings; on the other hand, it allows us to perform a softer, more fine-grained form of ontology repair.

A different approach for repairing ontologies through weakening was discussed
in~\cite{lam2008fine}. Our own approach is, however, quite different from it: while the repair algorithm of~\cite{lam2008fine} operates by pinpointing (and subsequently removing) the subcomponents of the axioms responsible for the contradiction, ours is based on a refinement operator, which combines both semantic (via the cover operators) and syntactic (via the compositional definitions of generalisations and specialisations of complex formulas) information in order to identify candidates for the replacement of the offending axiom(s). In particular, this implies---using the terminology of~\cite{ji2014measuring}---that our repair algorithm, in contrast to~\cite{lam2008fine}, is `black box' in that it treats the reasoner  as an oracle, and can thus be more easily combined with different choices of reasoner (or, with slightly more effort, applied to different logics). 

Another influential approach to ontology repair is discussed in~\cite{qi2006revision} and in~\cite{qi2006knowledge}. That approach, like ours, attempts to weaken problematic axioms; but it does so by adding \emph{exceptions} to value restrictions $\forall R.C(a)$,\footnote{Another difference is that we are also interested in repairing TBoxes, whereas the approach of~\cite{qi2006knowledge} operates only over ABoxes.} rather than by means of a more general-purpose transformation. 

We leave to future work the evaluation of our approach in comparison to other state-of-the-art ontology repair frameworks. As already stated, this is not an entirely well-posed problem; but if, as in this work, we accept that a suggested repair $O_1$ is preferable to another suggested repair $O_2$ whenever $\mathbf{card}(\Inf(O_1) \setminus \Inf(O_2)) > \mathbf{card}(\Inf(O_2) \setminus \Inf(O_1))$ then the question becomes amenable to analysis. Possibly, complementary metrics for further evaluations can be chosen from~\cite{Alani:2006:ROA:2127045.2127047}. Experiments involving user evaluation could be also considered in this context.


\section{Conclusions} 

We have proposed a new strategy for repairing ontologies based on the idea of weakening terminological and assertional axioms. Axiom weakening is a way to improve the balance between regaining consistency and keeping as much information from the original ontology as possible. 

We have investigated the theoretical properties of the refinement operators that are required in the definition of axiom weakening and analysed the computational complexity of employing them. Furthermore, the empirical evaluation shows that our weakening-based approach to repairing ontologies performs significantly better, in terms of preservation of information, than the removal-based approach.  

Future work will concentrate on the following two directions. Firstly, we plan to extend our evaluation to further corpora of ontologies and measures of information, including in particular measures to reflect the syntactic complexity of repaired ontologies and measures that reflect the preservation of entailments of competency questions. Secondly, we plan to extend the presented approach to axiom weakening to more expressive DL languages, including $\mathcal{SROIQ}$  underlying OWL 2 DL  \cite{SROIQ} and to full first-order logic, for which debugging is a particularly challenging problem \cite{dolcecon11}.  We expect that, for more complex languages, the weakening-based strategy will likewise significantly improve on the removal-based strategy, and indeed be even more appropriate by exploiting the higher syntactic complexity. 
\nb{our list of reference is dangerously long. It does not leave space to answer the reviewers comments in the camera ready, for which we can use the 8th page. In theory we are not allowed to remove a reference from the submitted version. We need to add PRIMA: OK: we decided to leave out PRIMA as it is unpublished and not accessible; regarding citations, I don't think it is a problem to change the bibliography after acceptance, we can always squeeze back into 8 pages as we like.
RPN: I don't think we should worry too much about how to address reviewers' comments at this point. We should rather submit the strongest paper we can! (and there are several places where we can gain a bit of space when it becomes necessary)}

\bibliographystyle{aaai}
\bibliography{aaai18}


\section*{Appendix}

\begin{proposition}\label{prop:bounded-sub}
Let \Tmc be an \ALC TBox. We have \[ \mathbf{card}(\sub(\mathcal{T})) \leq \sum_{C \sqsubseteq D} (\mid C \mid + \mid D \mid) + 2 \enspace .\]
\end{proposition}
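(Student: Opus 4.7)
The plan is a straightforward two-step argument: first establish the pointwise bound $\mathbf{card}(\sub(C)) \leq |C|$ for every single concept $C$ by structural induction, and then sum this bound over the inclusions in $\Tmc$ using the set-union bound implicit in Definition~\ref{def:sub}.

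For the pointwise claim, I would proceed by induction on the syntactic structure of $C$. In the base case $C \in N_C \cup \{\top, \bot\}$ the definition gives $\sub(C) = \{C\}$ and $|C| = 1$, so equality holds. For the inductive step, each clause in Definition~\ref{def:sub} produces $\sub$ as the union of $\{C\}$ with the $\sub$-sets of the immediate subconcepts, while Definition of $|\cdot|$ adds $1$ to the sum of the sizes of those same subconcepts. Hence in every case $\mathbf{card}(\sub(C)) \leq 1 + \sum_i \mathbf{card}(\sub(C_i)) \leq 1 + \sum_i |C_i| = |C|$, where the middle inequality uses the induction hypothesis and the standard fact that the cardinality of a union is at most the sum of cardinalities.

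For the global bound, I would apply Definition~\ref{def:sub} directly:
\begin{equation*}
\mathbf{card}(\sub(\Tmc)) \leq \mathbf{card}(\{\top,\bot\}) + \sum_{C \sqsubseteq D \in \Tmc} \bigl(\mathbf{card}(\sub(C)) + \mathbf{card}(\sub(D))\bigr),
\end{equation*}
and then plug in the pointwise bound from the previous paragraph to obtain the claimed inequality.

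I do not anticipate any real obstacle: the only subtlety is being precise about the fact that the union bound in the last display is an inequality (not equality), since subconcepts may be shared between the left- and right-hand sides of different axioms, and between $\{\top, \bot\}$ and the axioms; but this only makes the inequality slacker, not tighter, so it is harmless. Everything else is bookkeeping on the inductive definitions.
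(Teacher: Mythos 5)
Your proposal is correct and follows essentially the same route as the paper: the paper also reduces the proposition to the pointwise bound $\mathbf{card}(\sub(C)) \leq |C|$ (its Lemma on subconcept cardinality, proved by induction on the concept) and then obtains the global bound by summing over the axioms of $\Tmc$ together with the two extra elements $\top,\bot$. The only cosmetic difference is that you make the final union-bound step explicit, which the paper leaves implicit.
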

This is a consequence of Lemma~\ref{lem:bounded-sub-concept}.
\begin{lemma}\label{lem:bounded-sub-concept}
Let $C$ be an \ALC concept. We have $\mathbf{card}(\sub(C)) \leq \mid C \mid$.
\end{lemma}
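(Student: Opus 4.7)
The plan is a straightforward structural induction on the concept $C$, following the grammar that generates $\mathcal{ALC}$ concepts and the compositional definition of $\sub$ in Definition~\ref{def:sub}.

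For the base case, I would handle $C \in N_C \cup \{\top, \bot\}$ uniformly: by definition $\sub(C) = \{C\}$, so $\mathbf{card}(\sub(C)) = 1$, which matches $|C| = 1$ from the size definition.

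For the inductive step, I would go through the five syntactic constructors and in each case combine the corresponding clause of Definition~\ref{def:sub} with the size definition, using the induction hypothesis on the immediate subconcepts. For the unary cases ($\neg D$, $\forall R.D$, $\exists R.D$) we have $\sub(C) \subseteq \{C\} \cup \sub(D)$, so $\mathbf{card}(\sub(C)) \leq 1 + \mathbf{card}(\sub(D)) \leq 1 + |D| = |C|$. For the binary cases ($D \sqcap E$, $D \sqcup E$) we have $\sub(C) \subseteq \{C\} \cup \sub(D) \cup \sub(E)$, so $\mathbf{card}(\sub(C)) \leq 1 + \mathbf{card}(\sub(D)) + \mathbf{card}(\sub(E)) \leq 1 + |D| + |E| = |C|$. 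The key point in each line is that the cardinality of a union is bounded by the sum of cardinalities, which gives the desired inequality (rather than equality, since subconcepts of the components may coincide).

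There is essentially no obstacle: the inductive definition of $\sub$ and the inductive definition of $|\cdot|$ are parallel, with both adding exactly $1$ at each constructor, so the bound falls out by matching clauses. The only minor care is to use inclusion of sets rather than equality when bounding $\mathbf{card}$, since $\sub(D)$ and $\sub(E)$ may share elements in the binary cases, but this only strengthens the inequality.
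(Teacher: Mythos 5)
Your proof is correct and follows essentially the same route as the paper's: an induction that exploits the parallel clauses of the definitions of $\sub$ and $|\cdot|$, bounding the cardinality of a union by the sum of the cardinalities (the paper phrases it as induction on the size $|C|$ rather than on the syntactic structure, which makes no real difference). If anything, your uniform treatment of $\neg D$ for arbitrary $D$ is slightly more careful than the paper's proof, which only spells out the negation case for $\neg A$ with $A \in N_C$.
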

\begin{proof}
If $C \in N_C$, or $C = \top$, or $C = \bot$, then the result holds. Hence, it holds for all concepts of size~$1$.

If $C = \lnot A$, with $A \in N_C$, then the result holds. (We have $\mid C \mid = \mathbf{card}(\sub(C)) = 2$.) Hence, it holds for all concepts of size~$2$.

Let $k \geq 1$, and 
suppose for induction that
for every concept $C$, if $\mid C \mid \leq k$, then $\mathbf{card}(\sub(C)) \leq \mid C \mid$.

Let $C$ be a concept such that $\mid C \mid = k + 1$.

Case $C = \lnot A$. The result holds.

Case $C = E \sqcap F$. Clearly, $\mid E \mid \leq k$ and $\mid F \mid \leq k$. Thus, by IH, $\mathbf{card}(\sub(E)) \leq \mid E \mid$ and $\mathbf{card}(\sub(F)) \leq \mid F \mid$. Hence, $\mathbf{card}(\sub(C))\leq \mid E \mid + \mid F \mid + 1 = \mid C \mid$.

Case $C = E \sqcap F$. Idem.

Case $C = \exists R. D$. Clearly, $\mid D \mid = k$. Thus, by IH, $\mathbf{card} \sub(D)) \leq \mid D \mid$. Hence, $\mathbf{card}(\sub(C))\leq \mid D \mid + 1 = \mid C \mid$.

Case $C = \forall R. D$. Idem.
\end{proof}

\begin{lemma}\label{lem:gamma-is-generalisation}
For every \ALC concept $C$, and for every $C' \in \gamma(C)$, we have $C \sqsubseteq C'$.
\end{lemma}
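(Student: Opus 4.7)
The plan is to proceed by structural induction on the concept $C$, exploiting the recursive definition of $\zeta_{\refine,\corefine}$ in Table~\ref{tab:abstract-refop} with $\refine = \mathsf{UpCov}_\Tmc$ and $\corefine = \mathsf{DownCov}_\Tmc$. The key observation is that every \ALC constructor is monotone in each of its arguments, except negation which is anti-monotone; since we are working with concepts in negation normal form, negation only appears applied to concept names and can be handled as a single base case.

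For the base cases $C \in N_C \cup \{\top, \bot\}$, we have $\gamma_\Tmc(C) = \mathsf{UpCov}_\Tmc(C)$, and the required subsumption $C \sqsubseteq_\Tmc C'$ is then immediate from Definition~\ref{def:cover}. For the case $C = \neg A$ with $A \in N_C$, the set $\gamma_\Tmc(\neg A)$ splits into two parts. The part $\mathsf{UpCov}_\Tmc(\neg A)$ is handled again by Definition~\ref{def:cover}. For an element of the form $\mathsf{nnf}(\neg D)$ with $D \in \mathsf{DownCov}_\Tmc(A)$, the definition of downward cover gives $D \sqsubseteq_\Tmc A$, so by contravariance of negation $\neg A \sqsubseteq_\Tmc \neg D$, and since $\mathsf{nnf}$ preserves semantics, $\neg A \sqsubseteq_\Tmc \mathsf{nnf}(\neg D)$.

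For the inductive cases $C = E \sqcap F$, $C = E \sqcup F$, $C = \forall R.E$, and $C = \exists R.E$, the set $\gamma_\Tmc(C)$ is the union of a ``root'' contribution $\mathsf{UpCov}_\Tmc(C)$, handled directly by Definition~\ref{def:cover}, and one or two ``recursive'' contributions obtained by replacing a proper subconcept by an element of its $\gamma_\Tmc$-image. In each recursive case the induction hypothesis provides $E \sqsubseteq_\Tmc E'$ (resp.\ $F \sqsubseteq_\Tmc F'$), and monotonicity of $\sqcap$, $\sqcup$, $\forall R.(\cdot)$, and $\exists R.(\cdot)$ in their arguments yields the desired $C \sqsubseteq_\Tmc C'$.

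The only genuinely subtle step is the negation case, where one must be careful that the switch from $\mathsf{UpCov}$ to $\mathsf{DownCov}$ in Table~\ref{tab:abstract-refop} correctly inverts the direction of subsumption, and that taking $\mathsf{nnf}$ does not change the underlying concept up to equivalence; the other cases are routine applications of the induction hypothesis together with monotonicity of the \ALC constructors. This structural induction therefore establishes $C \sqsubseteq_\Tmc C'$ for every $C' \in \gamma_\Tmc(C)$.
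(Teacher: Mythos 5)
Your proof is correct and follows essentially the same route as the paper's own argument: structural induction on $C$, with the base cases and the ``root'' contribution $\mathsf{UpCov}_\Tmc(C)$ handled directly by Definition~\ref{def:cover}, the negated-atom case handled via $\mathsf{DownCov}_\Tmc(A)$, contravariance of negation and equivalence-preservation of $\mathsf{nnf}$, and the remaining constructors handled by the induction hypothesis plus monotonicity of $\sqcap$, $\sqcup$, $\forall R.(\cdot)$ and $\exists R.(\cdot)$. No gaps; this matches the appendix proof of Lemma~\ref{lem:gamma-is-generalisation}.
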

\begin{proof}
  By induction on the complexity of the concept $C$.
  
  Case $C \in N_C$, or $C = \top$, or $C = \bot$. $\gamma(C) = \mathsf{UpCov}(C)$. By definition of $\mathsf{UpCov}$, for every $C' \in \mathsf{UpCov}(C)$ we have $C \sqsubseteq C'$.

  \medskip
  
  Case $C = \lnot A$, with $A \in N_C$. 
  Let $C' \in \gamma(C)$. Either $C' \in \mathsf{UpCov}(C)$, for which the result is immediate, or there is $E \equiv \lnot C'$ such that $E \in \mathsf{DownCov}(A)$. By definition of $\mathsf{DownCov}$, we have $E \sqsubseteq A$, and thus $\lnot C' \sqsubseteq \lnot C$. Hence, $C \sqsubseteq C'$.

If $C' \in \gamma(C) \cap \mathsf{UpCov}(C)$, it is clear from the definition of $\mathsf{UpCov}$ that $C \sqsubseteq C'$. We'll ignore this subcase in the cases.

  \medskip
  
  Now, suppose for induction that the property holds for the concepts $E$ and $F$. Let $R$ be an arbitrary role in $N_R$.
  
  Case $C = E \sqcap F$. Subcase $C'$ is of the form $E' \sqcap F$, with $E' \in \gamma(E)$. By IH, we have $E \sqsubseteq E'$. Indeed, $C \sqsubseteq C'$. Subcase $C'$ is of the form $E \sqcap F'$, with $F' \in \gamma(F)$. By IH, we have $F \sqsubseteq F'$. Indeed, $C \sqsubseteq C'$.

  Case $C = E \sqcup F$. Subcase $C'$ is of the form $E' \sqcup F$, with $E' \in \gamma(E)$. By IH, we have $E \sqsubseteq E'$. Indeed, $C \sqsubseteq C'$. Subcase $C'$ is of the form $E \sqcup F'$, with $F' \in \gamma(F)$. By IH, we have $F \sqsubseteq F'$. Indeed, $C \sqsubseteq C'$.

  Case $C = \forall R. E$. Let $C' \in \gamma(C)$. Thus, $C'$ is of the form $\forall R. E'$. By IH, we have $E \sqsubseteq E'$. Indeed, $C \sqsubseteq C'$.

  Case $C = \exists R. E$. Let $C' \in \gamma(C)$. Thus, $C'$ is of the form $\exists R. E'$. By IH, we have $E \sqsubseteq E'$. Indeed, $C \sqsubseteq C'$.
\end{proof}

\begin{lemma}\label{lemma:gamma-finite}
For every finite TBox $\Tmc$ and for every concept $C$, the set of concepts $\gamma_{\Tmc}(C)$ is finite.
\end{lemma}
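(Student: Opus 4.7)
The plan is to proceed by structural induction on the concept $C$, leveraging the basic fact that $\mathsf{UpCov}_\Tmc$ and $\mathsf{DownCov}_\Tmc$ both return \emph{subsets of $\sub(\Tmc)$}, which by Proposition~\ref{prop:bounded-sub} is finite (of cardinality at most $|\Tmc|+2$). Since $\gamma_\Tmc$ is just $\zeta_{\refine,\corefine}$ instantiated with these two cover operators, every occurrence of $\refine$ or $\corefine$ in the defining equations of Table~\ref{tab:abstract-refop} contributes only finitely many concepts.

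For the base cases, where $C$ is a concept name, $\top$, $\bot$, or $\lnot A$ for $A \in N_C$, the defining equations express $\gamma_\Tmc(C)$ as a finite union of sets of the form $\mathsf{UpCov}_\Tmc(\cdot)$ or $\{\mathsf{nnf}(\lnot E) \mid E \in \mathsf{DownCov}_\Tmc(A)\}$, each of which is immediately finite because it injects into $\sub(\Tmc)$ (or into $\sub(\Tmc)$ pushed through a negation). For the inductive cases $C = D \sqcap E$, $C = D \sqcup E$, $C = \forall R.D$, and $C = \exists R.D$, the inductive hypothesis supplies finiteness of $\gamma_\Tmc(D)$ and, where applicable, $\gamma_\Tmc(E)$; the defining equation then writes $\gamma_\Tmc(C)$ as a union of (i)~concepts obtained by replacing one immediate subconcept of $C$ with an element of $\gamma_\Tmc(D)$ or $\gamma_\Tmc(E)$, of which there are only finitely many, and (ii)~$\mathsf{UpCov}_\Tmc(C)$, which is finite.

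There is no serious obstacle here: the argument is purely a bookkeeping induction, and the only ingredient needed is that $\sub(\Tmc)$ is finite, which is already recorded in Proposition~\ref{prop:bounded-sub}. In particular, unlike $\gamma_\Tmc^*(C)$, the single-step operator $\gamma_\Tmc(C)$ has a tree structure mirroring that of $C$, so the induction never has to deal with the unbounded nesting exhibited in the earlier example with $\Tmc = \{A \sqsubseteq \exists r.A\}$.
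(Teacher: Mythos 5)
Your proof is correct and follows essentially the same route as the paper's: the paper likewise argues that $\sub(\Tmc)$ is finite (Proposition~\ref{prop:bounded-sub}), hence $\mathsf{UpCov}_\Tmc$ and $\mathsf{DownCov}_\Tmc$ always return finite sets, and that the recursive calls of $\gamma_\Tmc$ are made on strictly smaller subconcepts, which is just a terser phrasing of your structural induction. (The appendix's Lemma~\ref{lemma:gamma-bounded} even carries out the same induction quantitatively, bounding $\mathbf{card}(\gamma_\Tmc(C))$ by $(|\Tmc|+2)\cdot|C|$.)
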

\begin{proof}
The set $\sub(\Tmc)$ is finite; In fact linear in the size of $\Tmc$, Lemma~\ref{prop:bounded-sub}.

For every concept $C$, we have $\mathsf{UpCov}_\Tmc(C) \subseteq \sub(\Tmc)$. So $\mathsf{UpCov}_\Tmc(C)$ is linearly bounded by $|\Tmc|$.

For every concept $C$, we have $\mathsf{DownCov}_\Tmc(C) \subseteq \sub(\Tmc)$. So $\mathsf{DownCov}_\Tmc(C)$ is linearly bounded by $|\Tmc|$.

For every concept $C$, the size $|\mathsf{nnf}(C)|$ is linearly bounded by $|C|$.

Finally, all recursive calls of $\gamma_{\Tmc}$ are done on concepts of strictly decreasing size.
\end{proof}
We can do better, and bound the size of $\gamma_\Tmc(C)$ linearly.
\begin{lemma}\label{lemma:gamma-bounded}
For every finite TBox $\Tmc$ and for every concept $C$, we have $\mathbf{card}(\gamma_{\Tmc}(C)) \leq (|\Tmc| + 2) \cdot |C|$.
\end{lemma}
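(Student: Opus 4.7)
I would prove this by straightforward induction on the size $|C|$, using the bounds $|\mathsf{UpCov}_\Tmc(C'')| \le |\sub(\Tmc)| \le |\Tmc|+2$ and $|\mathsf{DownCov}_\Tmc(C'')| \le |\Tmc|+2$, which follow from Proposition~\ref{prop:bounded-sub} and from the fact that both cover sets are subsets of $\sub(\Tmc)$.

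For the base case $|C|=1$, i.e.\ $C \in N_C \cup \{\top,\bot\}$, the definition of $\zeta_{\refine,\corefine}$ in Table~\ref{tab:abstract-refop} gives $\gamma_\Tmc(C) = \mathsf{UpCov}_\Tmc(C)$, so $\mathbf{card}(\gamma_\Tmc(C)) \le |\Tmc|+2 = (|\Tmc|+2)\cdot|C|$. For $C = \lnot A$ with $A \in N_C$ (so $|C|=2$), the definition yields $\gamma_\Tmc(C) \subseteq \{\mathsf{nnf}(\lnot E) \mid E \in \mathsf{DownCov}_\Tmc(A)\} \cup \mathsf{UpCov}_\Tmc(\lnot A)$, giving a bound of $2(|\Tmc|+2) = (|\Tmc|+2)\cdot|C|$.

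For the inductive step, I would take each compound constructor in turn and read off the bound directly from the recursive definition. For conjunction $C = E \sqcap F$, the definition gives $\mathbf{card}(\gamma_\Tmc(C)) \le \mathbf{card}(\gamma_\Tmc(E)) + \mathbf{card}(\gamma_\Tmc(F)) + \mathbf{card}(\mathsf{UpCov}_\Tmc(C))$, which by the induction hypothesis applied to $E$ and $F$ (both of size strictly less than $|C|$) is at most $(|\Tmc|+2)|E| + (|\Tmc|+2)|F| + (|\Tmc|+2) = (|\Tmc|+2)(|E|+|F|+1) = (|\Tmc|+2)\cdot|C|$. Disjunction is identical. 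For $C = \forall R.E$ or $C = \exists R.E$, the recursive clause gives the single-branch bound $\mathbf{card}(\gamma_\Tmc(C)) \le \mathbf{card}(\gamma_\Tmc(E)) + (|\Tmc|+2) \le (|\Tmc|+2)|E| + (|\Tmc|+2) = (|\Tmc|+2)\cdot|C|$ by IH.

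There is essentially no obstacle here: once the two structural inequalities $\mathbf{card}(\mathsf{UpCov}_\Tmc(C'')), \mathbf{card}(\mathsf{DownCov}_\Tmc(C'')) \le |\Tmc|+2$ are in hand, the inductive bookkeeping matches the definition of $|C|$ constructor-by-constructor. The only mildly delicate point worth flagging explicitly in the write-up is the negation clause, where the recursion drops into $\mathsf{DownCov}_\Tmc(A)$ rather than into a recursive call of $\zeta$; fortunately the extra factor of $|\Tmc|+2$ contributed by $\mathsf{DownCov}$ is absorbed by the fact that $|\lnot A| = 2$. It is also worth noting that $\mathsf{nnf}$ does not duplicate concepts in the negation clause (it acts as a function), so no additional multiplicity arises there.
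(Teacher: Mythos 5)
Your proof is correct and follows essentially the same route as the paper's own argument: induction on the structure of $C$, bounding each cover set by $\mathbf{card}(\sub(\Tmc)) \leq |\Tmc|+2$ and matching the recursive clauses of $\gamma_\Tmc$ against the definition of $|C|$ constructor by constructor, including the same treatment of the negation case via $\mathsf{DownCov}_\Tmc(A)$. No gaps; the remark about $\mathsf{nnf}$ being a function is a harmless extra observation.
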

\begin{proof}
The proof is done by induction on the complexity of the concept $C$.

When $C \in N_C$, or $C = \top$, or $C = \bot$, $\gamma_\Tmc(C) = \mathsf{UpCov}_\Tmc(C)$. Moreover $\mathbf{card}(\mathsf{UpCov}_\Tmc(C)) \leq \mathbf{card}(\sub(\Tmc))$. Proposition~\ref{prop:bounded-sub} ensures that the result holds.

When $C = \lnot A$, $\mathbf{card}(\gamma_\Tmc(C)) = \mathbf{card}(\mathsf{DownCov}_\Tmc(A)) + \mathbf{card}(\mathsf{UpCov}_\Tmc(C))$.
Since $\mathbf{card}(\mathsf{DownCov}_\Tmc(A)) \leq \mathbf{card}(\sub(\Tmc))$, Proposition~\ref{prop:bounded-sub} ensures the result again.

Now suppose for induction that the result holds for the concepts $D$ and $E$.

When $C = D \sqcap E$ or $C = D \sqcup E$, $\mathbf{card}(\gamma_\Tmc(C)) \leq \mathbf{card}(\gamma_\Tmc(D)) + \mathbf{card}(\gamma_\Tmc(E)) + \mathbf{card}(\mathsf{UpCov}_\Tmc(C))$.
By I.H., $\mathbf{card}(\gamma_\Tmc(C)) \leq (|D| \cdot (|\Tmc| + 2)) + (|E| \cdot (|\Tmc| + 2)) + (|\Tmc| + 2)$, which yields the result.

When $C = \exists r. D$ or $C = \forall r. D$, 
$\mathbf{card}(\gamma_\Tmc(C)) \leq \mathbf{card}(\gamma_\Tmc(D)) + \mathbf{card}(\mathsf{UpCov}_\Tmc(C))$.
By I.H., $\mathbf{card}(\gamma_\Tmc(C)) \leq (|D| \cdot (|\Tmc| + 2)) + (|\Tmc| + 2)$, which yields the result.
\end{proof}

\paragraph{Proof of Generalisability}
We prove in detail the generalisability property: 
\begin{center}
If $C, D \in \sub(\Tmc)$ and $C \sqsubseteq_{\Tmc} D$ then $D \in \gamma_{\Tmc}^*(C)$ \enspace .
\end{center}
We will do so by proving a similar statement about the upwards cover operator: 
\begin{center}
If $C, D \in \sub(\Tmc)$ and $C \sqsubseteq_{\Tmc} D$ then $D \in \mathsf{UpCov}_{\Tmc}^*(C)$ \enspace , 
\end{center}
where 
  \begin{itemize}
  \item $\mathsf{UpCov}_{\Tmc}^0(C) = \{C\}$;
  \item $\mathsf{UpCov}_{\Tmc}^{j+1}(C) = \mathsf{UpCov}_{\Tmc}^{j}(C) \cup \bigcup_{C' \in \mathsf{UpCov}_{\Tmc}^j(C)} \mathsf{UpCov}_{\Tmc}(C')$ \enspace, $j \geq 0$;
  \item $\mathsf{UpCov}_{\Tmc}^*(C) = \bigcup_{i \geq 0} \mathsf{UpCov}_{\Tmc}^i(C)$.
  \end{itemize}
First, we will prove the latter statement; then, we will use it (as well as the definition of $\gamma_{\Tmc}$) to prove the former. 

\begin{lemma}
Let $C, D \in \sub(\Tmc)$, $C \sqsubseteq_\Tmc D$ and $D \not \in \mathsf{UpCov}_{\Tmc}(C)$ then there exists some $C' \in \mathsf{UpCov}_{\Tmc}(C)$ such that $C \sqsubset_\Tmc C' \sqsubset_\Tmc D$. 
\label{lemma:upcov_from_below}
\end{lemma}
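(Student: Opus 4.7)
My plan is to exploit the defect in upward-cover membership to exhibit an intermediate concept between $C$ and $D$, and then sharpen it into an upward cover element by choosing a minimal such intermediate.

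First, I would observe that because $D \in \sub(\Tmc)$ and $C \sqsubseteq_\Tmc D$ are both assumed, the only way for $D$ to fall out of $\mathsf{UpCov}_\Tmc(C)$ (per Definition~\ref{def:cover}) is through the existence of some $E \in \sub(\Tmc)$ with $C \sqsubset_\Tmc E \sqsubset_\Tmc D$. So the set
\[
S := \{ E \in \sub(\Tmc) \mid C \sqsubset_\Tmc E \sqsubset_\Tmc D \}
\]
is non-empty. I would then pick a $\sqsubseteq_\Tmc$-minimal element $C'$ of $S$. Such an element exists because $S \subseteq \sub(\Tmc)$ is finite (by Proposition~\ref{prop:bounded-sub}) and $\sqsubset_\Tmc$ is transitive and irreflexive on $S$, so it cannot sustain an infinite descending chain.

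Finally, I would verify that $C' \in \mathsf{UpCov}_\Tmc(C)$. The membership conditions $C' \in \sub(\Tmc)$ and $C \sqsubseteq_\Tmc C'$ are immediate from $C' \in S$. For the covering condition, assume toward a contradiction some $D' \in \sub(\Tmc)$ satisfies $C \sqsubset_\Tmc D' \sqsubset_\Tmc C'$. Transitivity yields $D' \sqsubseteq_\Tmc D$; moreover $D' \equiv_\Tmc D$ would force $D \sqsubseteq_\Tmc D' \sqsubseteq_\Tmc C'$, together with $C' \sqsubseteq_\Tmc D$ this gives $C' \equiv_\Tmc D$, contradicting $C' \sqsubset_\Tmc D$. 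Hence $D' \sqsubset_\Tmc D$, so $D' \in S$ with $D' \sqsubset_\Tmc C'$, contradicting minimality. Combined with $C \sqsubset_\Tmc C' \sqsubset_\Tmc D$ (which holds directly because $C' \in S$), this gives the desired conclusion. The main subtlety will be the bookkeeping in the minimality step: since strict subsumption is only antisymmetric modulo $\equiv_\Tmc$, one has to rule out the degenerate case $D' \equiv_\Tmc D$ explicitly, as done above.
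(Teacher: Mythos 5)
Your proof is correct, and it reaches the conclusion by a slightly different route than the paper. Both arguments rest on the same two facts---$\sub(\Tmc)$ is finite and strict subsumption $\sqsubset_\Tmc$ is transitive and irreflexive---but they package them differently. The paper argues by contradiction: assuming no element of $\mathsf{UpCov}_\Tmc(C)$ lies strictly between $C$ and $D$, it builds by induction on $n$ an ever-longer chain $C \sqsubset_\Tmc D_n \sqsubset_\Tmc \cdots \sqsubset_\Tmc D_1 \sqsubset_\Tmc D$ of pairwise distinct elements of $\sub(\Tmc)$, contradicting finiteness. You instead give a direct extremal argument: the set $S$ of intermediates is non-empty (this is exactly the right reading of why $D \notin \mathsf{UpCov}_\Tmc(C)$ given $D \in \sub(\Tmc)$ and $C \sqsubseteq_\Tmc D$), finiteness gives a $\sqsubset_\Tmc$-minimal $C' \in S$, and minimality is precisely the cover condition for $C'$ over $C$. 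Your version avoids the induction and the top-level contradiction, and you were right to handle the one genuine subtlety explicitly---since $\sqsubseteq_\Tmc$ is only a preorder, you must rule out $D' \equiv_\Tmc D$ to conclude $D' \sqsubset_\Tmc D$ and place $D'$ back in $S$; your equivalence-chasing there is sound. The paper's chain-building formulation has the minor advantage that essentially the same induction is reused immediately afterwards to prove the iterated statement (Theorem~\ref{theo:gene_upcov}), whereas your minimality argument is the cleaner way to prove this single lemma.
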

\begin{proof}
We prove this by contradiction. Assume that $C, D \in \sub(\Tmc)$, $C \sqsubseteq_\Tmc D$, $D \not \in \mathsf{UpCov}_{\Tmc}(C)$, and that for all $E \in \sub(\Tmc)$ with $C \sqsubset_\Tmc E \sqsubset D$ it is the case that $E \not \in \mathsf{UpCov}_\Tmc$. Then we will prove, by induction on $n \in \mathbb N$, that for any integer $n$ there exists a chain $C \sqsubset_\Tmc D_n \sqsubset_\Tmc D_{n-1} \sqsubset_\Tmc \ldots \sqsubset_\Tmc D_1 \sqsubset_\Tmc D$ of concepts $D_1 \ldots D_n \in \sub(\Tmc)$.

This will imply a contradiction, because $\sub(\Tmc)$ is finite\footnote{It would suffice to know that the length of subsumption chains $A_1 \sqsubset_\Tmc A_2 \sqsubset_\Tmc \ldots \sqsubset_\Tmc A_n$ of elements in $\sub(\Tmc)$ is finite and bounded.} and because these $D_1 \ldots D_n$ will be pairwise distinct (since $D_i \sqsubset_\Tmc D_j$ whenever $i > j$).

Let us proceed with the induction. 
\begin{itemize}
\item \textbf{Base case: } Let $n =1$. Since $C \sqsubseteq_\Tmc D$ but $D \not \in \mathsf{UpCov}_\Tmc(C)$, by definition of the upward cover set there exists some $D_1 \in \sub(\Tmc)$ such that $C \sqsubset_\Tmc D_1 \sqsubset_\Tmc D$, as required. 
\item \textbf{Inductive case: } Suppose that the statement holds for $n$, i.e., there exist $D_1 \ldots D_n \in \sub(\Tmc)$ such that $C \sqsubset_\Tmc D_n \sqsubset_\Tmc D_{n-1} \sqsubset_\Tmc \ldots \sqsubset_\Tmc D_1 \sqsubset_\Tmc D$. 

Now by hypothesis, since $C \sqsubset_T D_n \sqsubset_\Tmc D$, it must be the case that $D_n \not \in \mathsf{UpCov}_\Tmc(C)$. But then, again by definition of upward cover set, there must exist some $D_{n+1} \in \sub(\Tmc)$ with $C \sqsubset_\Tmc D_{n+1} \sqsubset_\Tmc D_n$, as required.
\end{itemize}
\end{proof}

Using this lemma, we can now prove the generalisability property of the upward cover: 
\begin{theorem}
Let $C, D \in \sub(\Tmc)$ be such that $C \sqsubseteq_\Tmc D$. Then $D \in \mathsf{UpCov}^*_\Tmc(C)$, that is, there exist $n \in \mathbb N$ and $C_1, \ldots C_n \in \sub(\Tmc)$ such that 
\begin{enumerate}
\item $C_1 \in \mathsf{UpCov}_\Tmc(C)$; 
\item For all $i < n$, $C_{i+1} \in \mathsf{UpCov}_\Tmc(C_i)$; 
\item $C_n = D$. 
\end{enumerate}
\label{theo:gene_upcov}
\end{theorem}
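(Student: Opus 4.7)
The plan is to prove the theorem by induction on a well-founded measure that captures how ``far'' $D$ is above $C$ in the subsumption order restricted to $\sub(\Tmc)$. Define $\mu(C,D)$ to be the number of $\equiv_\Tmc$-equivalence classes $[E]$ of elements $E \in \sub(\Tmc)$ with $C \sqsubset_\Tmc E \sqsubseteq_\Tmc D$. Since $\sub(\Tmc)$ is finite (Proposition~\ref{prop:bounded-sub}), $\mu(C,D)$ is a natural number, so strong induction on $\mu(C,D)$ is well-founded.

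For the base case, suppose $D \in \mathsf{UpCov}_\Tmc(C)$. Then we are done immediately by taking $n=1$ and $C_1 = D$. In particular this covers the situation in which $C \equiv_\Tmc D$, because $D$ then belongs to $\mathsf{UpCov}_\Tmc(C)$ by semantic stability of the cover (Lemma~\ref{lem:gen}.\ref{item:lem-sem-stable-upcover}) and reflexivity (Lemma~\ref{lem:gen}.\ref{item:reflexivity}).

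For the inductive step, suppose $D \notin \mathsf{UpCov}_\Tmc(C)$. By Lemma~\ref{lemma:upcov_from_below}, there exists some $C' \in \mathsf{UpCov}_\Tmc(C)$ with $C \sqsubset_\Tmc C' \sqsubset_\Tmc D$. Because $C'$ was counted in $\mu(C,D)$ (it lies strictly above $C$ and is subsumed by $D$) but its entire equivalence class $[C']$ drops out of the count for $\mu(C',D)$ (since no element of $[C']$ is \emph{strictly} above $C'$), we have $\mu(C',D) < \mu(C,D)$. By the induction hypothesis applied to the pair $(C',D)$, which satisfies $C',D \in \sub(\Tmc)$ and $C' \sqsubseteq_\Tmc D$, there exist $C_2,\ldots,C_n \in \sub(\Tmc)$ with $C_2 \in \mathsf{UpCov}_\Tmc(C')$, $C_{i+1} \in \mathsf{UpCov}_\Tmc(C_i)$ for $2 \le i < n$, and $C_n = D$. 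Setting $C_1 := C'$ yields the desired chain from $C$ to $D$.

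The main obstacle is to justify that the inductive measure actually decreases, and this is where using equivalence classes rather than raw cardinality matters: two concepts in $\sub(\Tmc)$ that are $\equiv_\Tmc$-equivalent could otherwise both contribute to the count and cause spurious non-termination. Once this observation is in place, the argument reduces to an iterated application of Lemma~\ref{lemma:upcov_from_below}, in the same spirit as the chain construction used in the proof of that lemma, but now producing a \emph{finite} ascending chain in $\mathsf{UpCov}^*_\Tmc$ instead of deriving a contradiction. Finally, the generalisability statement of Lemma~\ref{lem:gen}.\ref{item:generalisability} follows by combining Theorem~\ref{theo:gene_upcov} with Lemma~\ref{lem:gen}.\ref{item:relevant-completeness}, which gives $\mathsf{UpCov}_\Tmc(C') \subseteq \gamma_\Tmc(C')$ at every step of the chain and therefore $D \in \gamma^*_\Tmc(C)$.
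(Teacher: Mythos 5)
Your proof is correct, and it runs on the same engine as the paper's: repeated application of Lemma~\ref{lemma:upcov_from_below} to climb from $C$ towards $D$ inside $\sub(\Tmc)$. The difference is only in the packaging. The paper argues by contradiction, showing that $D \notin \mathsf{UpCov}^*_\Tmc(C)$ would yield arbitrarily long strictly ascending chains of elements of the finite set $\sub(\Tmc)$, while you run a direct well-founded induction on the number $\mu(C,D)$ of $\equiv_\Tmc$-classes of subconcepts strictly above $C$ and below $D$. Counting equivalence classes rather than concepts is exactly the right choice (it is what makes the measure strictly decrease even when $\sub(\Tmc)$ contains $\Tmc$-equivalent concepts), and it makes fully explicit the boundedness of strict subsumption chains that the paper only invokes via finiteness of $\sub(\Tmc)$ and a footnote; as a bonus, your argument is constructive and yields the explicit length bound $n \leq \mu(C,D)+1$. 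Your treatment of the case $C \equiv_\Tmc D$ via reflexivity and semantic stability of the cover (Lemma~\ref{lem:gen}.\ref{item:reflexivity} and Lemma~\ref{lem:gen}.\ref{item:lem-sem-stable-upcover}) and the final passage to $\gamma^*_\Tmc$ via Lemma~\ref{lem:gen}.\ref{item:relevant-completeness} both match the paper's reasoning.
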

\begin{proof}
Assume that  $C, D \in \sub(\Tmc)$, $C \sqsubseteq_\Tmc D$ but $D \not \in \mathsf{UpCov}_\Tmc^*(C)$. We prove, by induction over $n \in \mathbb N$, that for any integer $n$ there exists a chain $C \sqsubset_\Tmc C_1 \sqsubset_\Tmc C_2 \sqsubset_\Tmc \ldots \sqsubset_\Tmc C_n \sqsubset_\Tmc D$ of pairwise distinct concepts $C_1 \ldots C_n \in \sub(\Tmc)$ such that $C_1 \in \mathsf{UpCov}_\Tmc(C)$ and $C_{i+1} \in \mathsf{UpCov}_\Tmc(C_i)$ for $i < n$. As in Lemma \ref{lemma:upcov_from_below}, this is a contradiction, because $\sub(\Tmc)$ is finite.
\begin{itemize}
\item \textbf{Base case:} Let $n = 1$. Since $D \not \in \mathsf{UpCov}^*_\Tmc(C)$, in particular it is the case that $D \not \in \mathsf{UpCov}_\Tmc(C)$.\footnote{Indeed, otherwise the trivial chain $C_1 = D$ would prove that $D \in \mathsf{UpCov}_\Tmc(C)$.} Then by Lemma \ref{lemma:upcov_from_below}, there exists some $C_1 \in \mathsf{UpCov}_\Tmc(C)$ such that $C \sqsubset_\Tmc C_1 \sqsubset_\Tmc D$, as required.
\item \textbf{Inductive case:} Suppose that the statement holds for $n$, i.e., there exist $C_1 \ldots C_n \in \sub(\Tmc)$ such that $C \sqsubset_\Tmc C_1 \ldots \sqsubset_\Tmc C_n \sqsubset_\Tmc D$, $C_1 \in \mathsf{UpCov}_\Tmc(C)$, and $C_{i+1} \in \mathsf{UpCov}_\Tmc(C_i)$ for all $i < n$. 

Then it must be the case that $D \not \in \mathsf{UpCov}_\Tmc(C_n)$, since otherwise it would be true that $D \in \mathsf{UpCov}_\Tmc^*(C)$ by means of the chain $C_1 C_2 \ldots C_n D$. But then, again by Lemma \ref{lemma:upcov_from_below}, it is also true that there exists some $C_{n+1} \in \mathsf{UpCov}_\Tmc(C_n)$ such that $C_n \sqsubset_\Tmc C_{n+1} \sqsubset_\Tmc D$, as required. 
\end{itemize}
\end{proof}
A straightforward consequence of this result is that our generalisation operator also satisfies the generalisability property: 
\begin{corollary}\label{cor:generalisability}
Let $C, D \in \sub(\Tmc)$ be such that $C \sqsubseteq_\Tmc D$. Then $D \in \gamma^*_\Tmc(C)$.
\end{corollary}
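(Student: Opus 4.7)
The plan is to obtain the corollary as an essentially immediate consequence of Theorem~\ref{theo:gene_upcov} combined with the \emph{relevant completeness} property established in Lemma~\ref{lem:gen}.\ref{item:relevant-completeness}, which states that $\mathsf{UpCov}_\Tmc(C') \subseteq \gamma_\Tmc(C')$ for every concept $C'$. Since we already know from Theorem~\ref{theo:gene_upcov} that there is a finite chain $C_1, \ldots, C_n$ in $\sub(\Tmc)$ with $C_1 \in \mathsf{UpCov}_\Tmc(C)$, $C_{i+1} \in \mathsf{UpCov}_\Tmc(C_i)$ for all $i < n$, and $C_n = D$, the natural strategy is to lift this chain verbatim into a chain through the generalisation operator $\gamma_\Tmc$.

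Concretely, I would first invoke Theorem~\ref{theo:gene_upcov} on the hypotheses $C, D \in \sub(\Tmc)$ and $C \sqsubseteq_\Tmc D$ to obtain such a chain $C_1, \ldots, C_n$. Then I would argue by a straightforward induction on $i \in \{1, \ldots, n\}$ that $C_i \in \gamma_\Tmc^i(C)$. The base case $i = 1$ follows because $C_1 \in \mathsf{UpCov}_\Tmc(C) \subseteq \gamma_\Tmc(C) = \gamma_\Tmc^1(C)$ by relevant completeness and the definition of the iteration. For the inductive step, assuming $C_i \in \gamma_\Tmc^i(C)$, the fact that $C_{i+1} \in \mathsf{UpCov}_\Tmc(C_i) \subseteq \gamma_\Tmc(C_i)$ together with the definition $\gamma_\Tmc^{i+1}(C) = \gamma_\Tmc^i(C) \cup \bigcup_{C' \in \gamma_\Tmc^i(C)} \gamma_\Tmc(C')$ yields $C_{i+1} \in \gamma_\Tmc^{i+1}(C)$.

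Applying this at $i = n$ gives $D = C_n \in \gamma_\Tmc^n(C) \subseteq \gamma_\Tmc^*(C)$, which is precisely the claim. There is no real obstacle here: all the difficulty has been absorbed into Theorem~\ref{theo:gene_upcov} (whose own proof handled the finiteness argument via $\sub(\Tmc)$) and into the relevant completeness item of Lemma~\ref{lem:gen}. The only thing one needs to be careful about is to match the index bookkeeping between the length of the $\mathsf{UpCov}$-chain and the iteration depth of $\gamma_\Tmc$, but this is routine.
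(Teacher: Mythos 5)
Your proposal is correct and follows essentially the same route as the paper's proof: invoke Theorem~\ref{theo:gene_upcov} to obtain the $\mathsf{UpCov}_\Tmc$-chain and then transfer it to $\gamma_\Tmc$ via the relevant completeness property of Lemma~\ref{lem:gen}.\ref{item:relevant-completeness}. The only difference is that you make the index bookkeeping explicit with an induction showing $C_i \in \gamma_\Tmc^i(C)$, which the paper leaves implicit; this is a harmless elaboration, not a different argument.
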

\begin{proof}
By Theorem \ref{theo:gene_upcov} we have at once that $D \in \mathsf{UpCov}_\Tmc^*(C)$, i.e., there exist concepts $C_1 \ldots C_n$ such that $C_1 \in \mathsf{UpCov}_\Tmc(C)$, $C_{i+1} \in \mathsf{UpCov}_\Tmc(C_i)$ for all $i < n$, and $C_n = D$. But by 
Lemma~\ref{lem:gen}.\ref{item:relevant-completeness},
$\mathsf{UpCov}_\Tmc(E) \subseteq \gamma_\Tmc(E)$ for every concept $E$; thus, the same chain $C \rightarrow^\gamma C_1 \rightarrow^\gamma C_2 \ldots \rightarrow^\gamma C_n=D$ also demonstrates that $D \in \gamma^*_\Tmc(C)$. 
\end{proof}


\end{document}